\documentclass[10pt]{article}

%
%
\usepackage{amsmath,amsthm,amsfonts,mathrsfs,anysize,fancyhdr,epsfig,mdframed, float,graphicx,dsfont}

\usepackage{chngcntr}
\usepackage{algorithm,algorithmic}
\theoremstyle{plain}
\newtheorem{theorem}{Theorem}

\newtheorem{lemma}[theorem]{Lemma}                              

\usepackage{enumitem}
\theoremstyle{definition}

\newtheorem{example}[theorem]{Example}
\newtheorem{remark}[theorem]{Remark}

\usepackage{chngcntr}

\usepackage[paperwidth=8.5in,paperheight=11in,top=1.2in, bottom=1.2in, left=0.9in, right=0.9in]{geometry}
\usepackage{mathtools}                                                      
\mathtoolsset{showonlyrefs=true}                                    
\def \caratt {{\mathds{1}}}
\providecommand{\keywords}[1]{\textbf{Keywords:} #1}

%

%
%

\begin{document}
\title{Analytic expressions for the output evolution of a deep neural network}
\author{Anastasia Borovykh\thanks{CWI Amsterdam, the Netherlands. \textbf{e-mail}: anastasia.borovykh@cwi.nl}
}
\maketitle

\begin{abstract}
We present a novel methodology based on a Taylor expansion of the network output for obtaining analytical expressions for the expected value of the network weights and output under stochastic training. Using these analytical expressions the effects of the hyperparameters and the noise variance of the optimization algorithm on the performance of the deep neural network are studied. In the early phases of training with a small noise coefficient, the output is equivalent to a linear model. In this case the network can generalize better due to the noise preventing the output from fully converging on the train data, however the noise does not result in any explicit regularization. In the later training stages, when higher order approximations are required, the impact of the noise becomes more significant, i.e. in a model which is non-linear in the weights noise can regularize the output function resulting in better generalization as witnessed by its influence on the weight Hessian, a commonly used metric for generalization capabilities. 
\end{abstract}

\keywords{deep learning; Taylor expansion; stochastic gradient descent; regularization; generalization}

\section{Introduction}
With the large number of applications which are nowadays in some way using deep learning, it is of significant value to gain insight into the output evolution of a deep neural network and the effects that the model architecture and optimization algorithm have on it.
A deep neural network is a complex model due to the non-linear dependencies and the large number of parameters in the model. Understanding the network output and its generalization capabilities, i.e. how well a model optimized on train data will be able to perform on unseen test data, is thus a complex task. One way of gaining insight into the network is by studying it in a large-parameter limit, a setting in which its dynamics becomes analytically tractable. Such limits have been considered in e.g. \cite{lee17}, \cite{matthews18}, \cite{jacot18}, \cite{rotskoff18}, \cite{chizat18}. 

The generalization capabilities and the definition of various quantities that measure these have been studied extensively. Previous work has shown that the norm \cite{bartlett17spec}, \cite{neyshabur15}, \cite{li18tight}, the width of a minimum in weight space \cite{hochreiter97}, \cite{sagun17}, the input sensitivity \cite{novak18} and a model's compressibility \cite{arora18} can be related (either theoretically or in practice) to the model's complexity and thus its ability to perform well on unseen data. 
Furthermore, it has been noted that the generalization capabilities can be influenced by the optimization algorithm used to train the model, e.g. it can be used to bias the model into configurations that are more robust to noise and have lower model complexity, see e.g. \cite{arora19}, \cite{srebro17}, \cite{neyshabur17}. 
Furthermore, it has been observed that certain parameters of stochastic gradient descent (SGD) can be used to control the generalization error and the data fit, see e.g. \cite{kenton17}, \cite{seong18}, \cite{borovykh19}, \cite{chaudhari18}, \cite{li19gen}, \cite{zhu19}, \cite{simsekli19}. Besides the optimization algorithm, also the particular model architecture can influence the generalization capabilities \cite{gunasekar18}, \cite{goldt19}.

Understanding the effects that the optimization scheme has on the output and weight evolution of deep neural networks is crucial for understanding what and how the model learns. The goal of this work is to present, \emph{using analytic formulas}, the effect of certain hyperparameters of the optimization algorithm, and in particular of the noise introduced in stochastic training, on the output and weight evolution of deep neural networks. 

In previous work \cite{jacot18} \cite{yang19} \cite{chizat18} it was shown that in the case of a neural network with infinitely wide layers and a particular scaling of the weights, the time-dependent kernel governing the weight evolution under gradient descent converges to a \emph{deterministic} kernel. The neural network output is in this case equivalent to that of a linear model (the first-order Taylor approximation of the network output around the weights) \cite{lee19}, and the output evolution can be solved for explicitly. The equivalence of a neural network trained with gradient descent to a linear model is also known as lazy training \cite{chizat18}, a regime under which the weights do not move far from their values at initialization. This regime can also be attained for finite width models when the scale at initialization goes to infinity \cite{chizat18}, \cite{woodworth19}. 

In this work we go beyond the linear model approximation and present a method that can be used to solve for the $n$-th order Taylor approximation of the deep neural network under a general form of stochastic training. In order to gain insight into the weight evolution, we can formulate a Cauchy problem which the density of the weights will satisfy under SGD training. This Cauchy problem has state-dependent coefficients, so an explicit solution is not trivial. In Section \ref{sec3}, using a Taylor expansion method \cite{lorig15a}, \cite{lorig17} we show how to obtain an analytic expression for an approximation of the expected value of the weights and network output under stochastic training. In the first phase of training, with a small noise variance and when the network weights are scaled in a particular way \cite{jacot18}, the network output can be approximated by a model which is linear in the weights. In Section \ref{sec32} we show that in this case the noise can result in a larger training error by keeping the network from fully converging on the train data, however the noise does not explicitly smooth or regularize the network output. 
In the case of a deep network with relatively narrow layers, one without the weight scaling or one which was trained for a large number of iterations, the network is no longer in the lazy training regime. In Section \ref{sec33} we study the influence of the hyperparameters of the optimization algorithm on the network output in this non-lazy setting. We show that a sufficient amount of noise can smooth the output function resulting in a smaller weight Hessian. We demonstrate the effects of the hyperparameters on weight and network evolution using numerical examples in Section \ref{sec4}. In the rest of this work, when we refer to `the amount of noise' we typically refer to the standard deviation $\sigma$ of noise present in the optimization algorithm. 

Our contributions can be summarized as follows: 1) we present a novel methodology for analyzing the output function under stochastic training when the first-order approximation is not sufficient -- which occurs for more training iterations or deep, relatively shallow, networks -- based on solving the Cauchy problem of the network output; 2) we show that under lazy training, i.e. when the network output is equivalent to the first-order Taylor approximation of the output around the weights, isotropic noise from noisy training does not smooth the solution, but keeps the loss function from fully converging; 3) we show that when the network is \emph{non-linear} in the weights (i.e. when the first-order approximation is not sufficient), noise has an explicit effect on the smoothness of the output, and sufficient and correctly scaled noise can decrease the complexity of the function. 

\section{A Taylor-expansion method for the network output}\label{sec2}
In this section we will introduce the neural network architecture and the optimization algorithms considered in this work. Furthermore, we will introduce a novel methodology for solving for the network output evolution under stochastic gradient descent through a Taylor expansion method. 
\subsection{Neural networks}\label{sec21}
Consider an input $x\in\mathbb{R}^{n_0}$ and a feedforward neural network consisting of $L$ layers with $n_l$ hidden nodes in each layer $l=1,...,{L-1}$ and a read-out layer with $n_{L}=1$ output node. 
Each layer $l$ in the network then computes for each $i=1,...,n_l$ 
\begin{align}\label{eq:nnstan}
a_i^l(x) =\sum_{j=1}^{n_{l-1}} w^l_{i,j}z^{l-1}_j + b^l_j,\;\;z_i^l(x)=h(a_i^l(x)),
\end{align}
where $h(\cdot)$ is the non-linear activation function, $w^l\in\mathbb{R}^{n_{l-1}\times n_{l}}$ and $b^l\in\mathbb{R}^{n_l}$. The weights and biases are initialized according to 
\begin{align}
w_{i,j}^l\sim\mathcal{N}\left(0,\frac{\sigma_w^2}{n_{l-1}}\right), \;\;\; b_{j}^l\sim\mathcal{N}\left(0,\sigma_b^2\right),
\end{align}
where $\sigma_w^2$, $\sigma_b^2$ are the weight and bias variances, respectively. 
This is the standard way of defining the neural network outputs. 
Alternatively, we can consider a different parametrization, which is referred to as \emph{NTK scaling}. It differs slightly from the standard one in \eqref{eq:nnstan} due to the scaling of the weights in both the forward and backward pass (see also Appendix E in \cite{lee19}),
\begin{align}\label{eq:nnntk}
a_i^l(x) = \frac{\sigma_w}{\sqrt{n_{l-1}}}\sum_{j=1}^{n_{l-1}} w^l_{i,j}z^{l-1}_j + \sigma_b b^l_j,\;\;z_i^l(x)=h(a_i^l(x)).
\end{align}
Note that in both cases in the first layer we compute the linear combination using the input, i.e. $z^0 = x$ with $n_0$ as the input layer size. The output is given by $\hat y(x) = a^{L+1}(x)$. Let $X\in\mathbb{R}^{N\times n_0}$ with $X=(x^1,...,x^N)$, and $Y\in\mathbb{R}^{N\times 1}$ with $Y=(y^1,...,y^N)$ be the training set.

Let the loss function of the neural network over the data $(X,Y)$ be given by $\mathcal{L}(X,\hat y_t,Y)$, where we will drop the dependence on $(X,Y)$ if this is clear from the context. In regression problems the loss is given by the mean-squared error:
\begin{align}
\mathcal{L}(X,\hat y_t,Y) = \frac{1}{2N}||\hat y_t(X) - Y||_2^2 = \frac{1}{2N}\sum_{i=1}^N \left(\hat y_t(x^i)-y^i\right)^2.
\end{align}
One way of minimizing the neural network loss function is through gradient descent. Set $\theta^l = [w_{i,j}^l,b^l_j]_{i,j}$ to be the collection of parameters mapping to the $l$-th layer such that $\theta^l\in\mathbb{R}^{(n_{l-1}+1)\times n_l}$ and let $\theta\in\mathbb{R}^d$ where $d=(n_0+1)n_1+...(n_{L-1}+1)n_L$ be the vectorized form of all parameters. The gradient descent updates are then given by,
\begin{align}
\theta_{t+1}=\theta_t-\eta\nabla_\theta\mathcal{L}(\hat y_t), \;\;\;\textnormal{where }\nabla_\theta\mathcal{L}(\hat y_t)= (\nabla_\theta\hat y_t)^T\nabla_{\hat y}\mathcal{L}(\hat y_t).
\end{align}
By a continuous approximation of the discrete gradient descent updates (see e.g. \cite{li17} for when this approximation holds) the parameters evolve as, 
\begin{align}\label{eq:nngd}
\partial_t \theta_t &= -\eta \nabla_\theta \mathcal{L}(\hat y_t),
\end{align}
where $\eta$ is the learning rate and  $\nabla_\theta\hat y_t\in\mathbb{R}^{N\times d}$. 
In gradient descent the gradient is computed over the full training data batch. Alternatively, one can use \emph{stochastic} gradient descent, where the gradient is computed over a subsample of the data. Let the gradient in a mini-batch $\mathcal{S}$ be $g_\mathcal{S}\in\mathbb{R}^d$ and the full gradient be $g\in\mathbb{R}^d$, where $d$ is the weight space dimension, defined respectively as,
\begin{align}
g_{\mathcal{S}} := \frac{1}{M}\sum_{i\in\mathcal{S}}\nabla_{\theta}\mathcal{L}(x^i,\hat y_t,y^i)=:\frac{1}{M}\sum_{i\in\mathcal{S}}g_i.
\end{align}
The weight update rule is given by $\theta_{t+1} = \theta_t - \eta g_{\mathcal{S}}$, where $\eta$ is the learning rate. By the central limit theorem, if the train data $(x_i,y_i)\sim\mathcal{D}$ i.i.d.
the weight update rule can be rewritten as,
\begin{align}\label{eq:wupnoise}
\theta_{t+1}=\theta_t - \eta \nabla_\theta\mathcal{L}(\hat y_t)+ \frac{\eta}{\sqrt{M}} \epsilon,
\end{align}
where $\epsilon\sim\mathcal{N}(0,D(\theta))$ with $\epsilon\in\mathbb{R}^d$ and $D(\theta)=\mathbb{E}\left[( g_{\mathcal{S}}-g)^T(g_{\mathcal{S}}-g) \right]$.

This expression can then be considered to be a finite-difference equation that approximates a continuous-time SDE. If we assume that the diffusion matrix $D(\theta)$ is isotropic, i.e. it is proportional to the identity matrix and let $D(\theta) = \sigma^2 I_{d}$, the continuous-time evolution is given by
\begin{align}\label{eq:sgdsde}
d\theta_t &= -\eta \nabla_\theta\mathcal{L}(\theta_t)dt+\sigma \frac{\eta}{\sqrt{M}}dW_t=-\eta\nabla_\theta\hat y_t^T \nabla_{\hat y}\mathcal{L}(\hat y_t)dt + \sigma\frac{\eta}{\sqrt{M}}dW_t.
\end{align}
In this work we will focus on understanding the effects of the noise, given by a scaled Brownian motion, on the evolution of the network weights and output. 
The convergence to the continuous-time SDE is studied in e.g. \cite{bartlett19}. The continuous limit of SGD has been analysed in prior work \cite{mandt17}, \cite{li17}, \cite{simsekli19}. 

Studying the behavior of the above SDE is of interest in order to gain insight into the network output evolution. Under mild regularity assumptions on $\mathcal{L}$, the Markov process $(\theta_t)_{t\geq 0}$ is ergodic with a unique invariant measure whose density is proportional to $\displaystyle e^{-\mathcal{L}(x)M/\eta^2}$ \cite{roberts02}. Analyzing the network output based on the \emph{invariant} measure implicitly assumes that sufficient iterations have been taken to converge to that measure, which for a finite number of iterations is not always the case. The invariant measure thus cannot be used to explain the behavior of SGD and its generalization ability for \emph{finite} training times. In the next sections we analyze the behavior of the network weights and output through analytic formulas as a function of time $t$, so that also the finite-time behavior can be studied.  

\subsection{Sufficiency of the linear model}
Consider a FNN whose parameters $\theta$ are trained with gradient descent as in \eqref{eq:nngd}. The network output evolves as 
\begin{align}
\label{eq:evolution2}
\partial_t \hat y_t &= \nabla_\theta \hat y_t\partial_t \theta_t=-\eta \nabla_\theta\hat y_t(\nabla_\theta \hat y_t)^T\nabla_{\hat y}\mathcal{L}(\hat y)=:-\eta \Theta^L_t \nabla_{\hat y}\mathcal{L}(\hat y),
\end{align}
where $\Theta_t^L\in\mathbb{R}^{N\times N}$ is the defined to be the neural tangent kernel (NTK) for the output layer $L$.  For simplicity we denote $\Theta_t^L:=\Theta_t$. In this section we will repeat the conclusions of previous work \cite{jacot18}, \cite{lee19} which shows that under the NTK scaling in \eqref{eq:nnntk} the network output is equivalent to a model which is linear in the weights and one can solve explicitly for the network output. Observe that in general the NTK from \eqref{eq:evolution2} is random at initialization and varies during training; however as shown by the authors of \cite{jacot18} as the number of neurons in an FNN sequentially goes to infinity $n_1,...,n_{L-1}\rightarrow\infty$ (note that \cite{yang19} extends this to a simultaneous limit) the NTK converges to a deterministic limit. In particular we have,
\begin{lemma}[NTK convergence \cite{jacot18}]\label{lem0}
If $n_1,...,n_{L-1}\rightarrow\infty$ sequentially and
\begin{align}\label{eq:condsb}
\int_0^T||\nabla_{\hat y}\mathcal{L}(\hat y_t(X))||_2dt<\infty,
\end{align} 
 the NTK converges to a deterministic limit $\Theta_t\rightarrow\Theta_0$, which can be computed recursively using,
\begin{align}
&\Theta_0^0(x,x') = k^0(x,x') \otimes I_{n_1},\\
&\Theta_0^{l}(x,x') = (\sigma_w^2\Theta_0^{l-1}(x,x')\dot k^{l}(x,x')+k^l(x,x'))\otimes I_{n_l},
\end{align}
where $k^l(x,x') = \mathbb{E}\left[h(a^{l-1}(x))h(a^{l-1}(x'))\right]$ and $\dot k^l(x,x') = \mathbb{E}\left[h'(a^{l-1}(x))h'(a^{l-1}(x'))\right]$. 
\end{lemma}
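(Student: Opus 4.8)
The plan is to proceed by induction on the layer index $l$, combining the Gaussian-process description of the forward pass at initialization with a law-of-large-numbers concentration argument for the gradient inner products that define the kernel, and finally to control the drift of the kernel over the training interval $[0,T]$. Throughout, the sequential limits $n_1,\dots,n_{L-1}\to\infty$ must be taken in the prescribed order, so that each concentration step is applied conditionally on the lower layers having already been sent to their deterministic limits.

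First I would establish the forward-pass Gaussian process limit. Conditioning on the widths $n_1,\dots,n_{l-1}$ having already been sent to infinity, the pre-activations $a^l_i(x)$ are, by the recursive structure in \eqref{eq:nnntk}, sums of $n_{l-1}$ i.i.d. contributions of the form $\tfrac{\sigma_w}{\sqrt{n_{l-1}}} w^l_{i,j} z^{l-1}_j$. Applying the central limit theorem as $n_{l-1}\to\infty$ shows that $a^l_i$ converges to a centered Gaussian process whose covariance is exactly $k^l(x,x') = \mathbb{E}[h(a^{l-1}(x))h(a^{l-1}(x'))]$, the expectation being taken under the limiting Gaussian law of the previous layer. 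This is the standard NNGP correspondence, and it simultaneously supplies the derivative kernels $\dot k^l(x,x') = \mathbb{E}[h'(a^{l-1}(x))h'(a^{l-1}(x'))]$ that will appear in the backward pass.

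Next I would decompose the NTK by layer. Using backpropagation, the Jacobian $\nabla_\theta \hat y_t$ splits into blocks indexed by the parameters $\theta^l$ of each layer, so that the kernel becomes the sum over $l$ of the Gram matrices $(\nabla_{\theta^l}\hat y_t)(\nabla_{\theta^l}\hat y_t)^T$. Each block factorizes into a forward part, built from the activations $z^{l-1}$ feeding layer $l$, and a backward part, built from the sensitivity of the output to $a^l$, which carries a factor $h'(a^l)$. Sending the widths to infinity sequentially, each of these inner products is an average of $n_l$ (respectively $n_{l-1}$) i.i.d. terms and so, by the law of large numbers, concentrates on its expectation: the forward average yields $k^l$, the backward average yields the product of $\dot k^l$ with the already-converged lower-layer kernel $\Theta_0^{l-1}$, and together these give the recursion $\Theta_0^l = (\sigma_w^2\,\Theta_0^{l-1}\,\dot k^l + k^l)\otimes I_{n_l}$ with base case $\Theta_0^0 = k^0\otimes I_{n_1}$.

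The main obstacle will be the final step: upgrading convergence at initialization to the time-dependent statement $\Theta_t\to\Theta_0$ uniformly on $[0,T]$. Here I would use the integrability condition \eqref{eq:condsb} to bound the total length of the weight trajectory under the flow \eqref{eq:nngd}, showing that each per-layer displacement is of order $1/\sqrt{n_{l-1}}$ under the NTK scaling, so that the kernel drift $\partial_t \Theta_t$ vanishes in the infinite-width limit. The delicacy is twofold: because the limits are sequential, after sending $n_1\to\infty$ the quantities at layers $\ge 2$ remain random, so the concentration arguments must be nested in the correct order; and bounding $\Theta_t - \Theta_0$ requires uniform-in-time Lipschitz estimates on the activations and their derivatives, which is precisely where the regularity of $h$ and the condition \eqref{eq:condsb} enter. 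Once this stability is in place, the deterministic recursion established at initialization persists throughout training, completing the proof.
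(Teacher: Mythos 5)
The paper does not actually prove this lemma: it is imported verbatim from \cite{jacot18} (as the bracketed attribution indicates), so there is no in-paper argument to compare yours against other than the original reference. Your outline follows exactly the strategy of that reference -- forward-pass Gaussian process limit, layer-wise decomposition of the Gram matrix with a law-of-large-numbers concentration at initialization, and a stability argument over $[0,T]$ -- so the route is the right one.

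The genuine gap is in the last step, which is also where all the real work in \cite{jacot18} lives. You write that each per-layer weight displacement is of order $1/\sqrt{n_{l-1}}$ and that therefore the kernel drift vanishes, but this inference is not valid as stated: $\Theta_t^l$ is a sum over $O(n_{l-1}n_l)$ parameters, and a sum of many terms each perturbed by an individually vanishing amount need not converge to its initial value. The actual argument requires (i) a Gr\"onwall-type estimate, using the integrability condition \eqref{eq:condsb} together with the $1/\sqrt{n_{l-1}}$ scaling in \eqref{eq:nnntk}, showing that the pre-activations and activations themselves (not just the weights) move by $O(1/\sqrt{n})$ uniformly on $[0,T]$; and (ii) a bound on $\partial_t\Theta_t$ showing that the kernel, as an empirical average of these quantities, inherits a vanishing drift -- this is the content of Theorem 2 and Appendix A.2 of \cite{jacot18}, and it is the step your sketch asserts rather than proves. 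A second, smaller issue: the derivative kernels $\dot k^l$ involve $h'$, so for activations that are merely Lipschitz (e.g.\ ReLU, which the paper uses in Section \ref{sec4}) the concentration of the backward-pass averages and the stability of $\dot k^l$ under the $O(1/\sqrt{n})$ movement of the pre-activations require separate justification, since $h'$ is discontinuous. Until these two points are supplied, the proposal is a correct plan but not a proof.
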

The evolution of the tangent kernel is independent of the values of the parameters at time $t$, i.e. time-independent. This makes the evolution of the network equivalent to a linear network as proven in \cite{lee19}. 
\begin{lemma}[Linear network similarity \cite{lee19}]\label{lem01}
Define
\begin{align}\label{eq:linappgd}
\hat y^{lin}_t := \hat y_0 + \nabla_\theta \hat y_0 (\theta_t-\theta_0).
\end{align}
Under the assumptions from Lemma \ref{lem0} it holds that $\sup_{t\in [0,T]}||\hat y_t-\hat y^{lin}_t||_2\rightarrow 0$ as $n_1,...,n_{L-1}\rightarrow\infty$ sequentially.
\end{lemma}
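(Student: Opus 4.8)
The plan is to recognize that, by the very definition \eqref{eq:linappgd}, the discrepancy
\[
\hat y_t - \hat y^{lin}_t = \hat y(\theta_t) - \hat y(\theta_0) - \nabla_\theta \hat y(\theta_0)\,(\theta_t-\theta_0)
\]
is nothing but the first-order Taylor remainder of the map $\theta\mapsto \hat y(\theta)$ expanded at $\theta_0$ and evaluated along the gradient-descent trajectory $\theta_t$. Writing the remainder in integral form,
\[
\hat y_t-\hat y^{lin}_t = \int_0^1 (1-s)\,(\theta_t-\theta_0)^T \nabla^2_\theta \hat y\big(\theta_0+s(\theta_t-\theta_0)\big)(\theta_t-\theta_0)\,ds,
\]
so that (up to the fixed factor $\sqrt{N}$ coming from vector-valued output)
\[
\|\hat y_t-\hat y^{lin}_t\|_2 \le \tfrac12\,\|\theta_t-\theta_0\|_2^2 \sup_{\theta\in B(\theta_0,R)}\big\|\nabla^2_\theta \hat y(\theta)\big\|_{op},
\]
where $B(\theta_0,R)$ is any ball containing the trajectory. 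Everything therefore reduces to two estimates: a uniform-in-width bound on the weight displacement $\|\theta_t-\theta_0\|_2$, and a vanishing bound on the operator norm of the parameter-Hessian of the output.

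First I would control the displacement. From the gradient-descent flow \eqref{eq:nngd}, $\partial_t\theta_t = -\eta\,\nabla_\theta\hat y_t^T\nabla_{\hat y}\mathcal{L}(\hat y_t)$, and hence
\[
\|\theta_t-\theta_0\|_2 \le \eta\int_0^t \big\|\nabla_\theta\hat y_s\big\|_{op}\,\big\|\nabla_{\hat y}\mathcal{L}(\hat y_s)\big\|_2\,ds .
\]
The Jacobian factor satisfies $\|\nabla_\theta\hat y_s\|_{op}=\|\Theta_s\|_{op}^{1/2}$, which is $O(1)$ at $s=0$ because $\Theta_0$ converges to the finite deterministic kernel of Lemma \ref{lem0}; combined with the integrability hypothesis \eqref{eq:condsb} this yields a bound $\|\theta_t-\theta_0\|_2\le R$ on $[0,T]$ with $R$ independent of the widths.

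The heart of the matter is the Hessian bound. The key structural fact from the NTK normalization \eqref{eq:nnntk} is that each additional parameter-derivative of the output in a wide direction carries a factor $n_l^{-1/2}$, so that with high probability over initialization
\[
\sup_{\theta\in B(\theta_0,R)}\big\|\nabla^2_\theta \hat y(\theta)\big\|_{op} = O\!\big(n_{\min}^{-1/2}\big),\qquad n_{\min}=\min_{1\le l\le L-1} n_l,
\]
uniformly over the fixed-radius ball. I would prove this by tracking how the layerwise Jacobians $\partial z^l/\partial\theta$ and their second derivatives propagate through \eqref{eq:nnntk}, using boundedness of the activation $h$ and its first two derivatives together with concentration of the pre-activations; the $\sqrt{n_{l-1}}$ normalization in both the forward and backward pass is exactly what suppresses the second derivative while keeping the first derivative (and hence the NTK) of order one. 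Feeding this into the remainder bound gives $\|\hat y_t-\hat y^{lin}_t\|_2\le \tfrac12 R^2\cdot O(n_{\min}^{-1/2})$, uniformly in $t\in[0,T]$, which vanishes in the sequential limit $n_1,\dots,n_{L-1}\to\infty$ exactly as in Lemma \ref{lem0}.

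The main obstacle is that Steps~1 and~2 are coupled: the Hessian estimate is only available on $B(\theta_0,R)$, while the displacement estimate implicitly uses that $\|\nabla_\theta\hat y_s\|_{op}$ stays $O(1)$ along the trajectory, which itself follows from $\nabla_\theta\hat y_s=\nabla_\theta\hat y_0+\int_0^1\nabla^2_\theta\hat y(\cdots)(\theta_s-\theta_0)\,ds$ and the same Hessian bound. I would therefore run a bootstrap/continuity argument: fix $R$ from the $O(1)$ initial Jacobian and the integrability condition, invoke the high-probability Hessian bound valid throughout $B(\theta_0,R)$, and verify a posteriori that the trajectory never leaves $B(\theta_0,R)$ on $[0,T]$, closing the loop. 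A cleaner but essentially equivalent route (the one taken in \cite{lee19}) replaces $\hat y^{lin}_t$ by the solution of the frozen-kernel linear dynamics $\partial_t\hat y^{lin}=-\eta\Theta_0\nabla_{\hat y}\mathcal{L}(\hat y^{lin})$ and closes the estimate with Gr\"onwall's inequality, but for the remainder formulation stated here the Taylor argument above is the most direct.
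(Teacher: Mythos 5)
Your proposal is correct in outline and goes considerably further than the paper does: the paper's ``proof'' is an informal two-sentence sketch (the linear and nonlinear dynamics stay close because $\sup_{t\in[0,T]}\|\Theta_t-\Theta_0\|_{op}\to 0$) followed by a citation to Appendix~E of \cite{lee19}. Your primary route is genuinely different from the one in that reference. You treat $\hat y_t-\hat y^{lin}_t$ as the exact first-order Taylor remainder along the true trajectory and reduce everything to (i) a width-independent bound on $\|\theta_t-\theta_0\|_2$ and (ii) the $O(n_{\min}^{-1/2})$ bound on $\|\nabla^2_\theta\hat y\|_{op}$ over a fixed ball, closed by a bootstrap; the cited proof instead compares the two \emph{dynamics} (the true flow and the frozen-kernel flow) via local Lipschitzness of the Jacobian and Gr\"onwall. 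Both hinge on the same key technical lemma --- the width-suppressed second derivative under the $1/\sqrt{n_{l-1}}$ scaling of \eqref{eq:nnntk} --- which you state with the right heuristic but do not prove; since the paper defers the entire argument to \cite{lee19}, that is a comparable level of rigor. What your route buys is directness and a cleaner quantitative remainder bound $\tfrac12 R^2\,O(n_{\min}^{-1/2})$; what the Gr\"onwall route buys is that it also covers the reading of \eqref{eq:linappgd} in which $\theta_t$ follows the \emph{linearized} dynamics \eqref{eq:linupd} (which is how the paper subsequently uses $\hat y^{lin}_t$ in \eqref{eq:sols1}--\eqref{eq:sols2}); your Taylor-remainder argument alone establishes closeness to the linearization evaluated at the \emph{true} trajectory, and you correctly flag that the dynamics comparison is the extra step needed to bridge the two. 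Two minor points worth making explicit if you write this up: the identity $\|\nabla_\theta\hat y_s\|_{op}=\|\Theta_s\|_{op}^{1/2}$ must be propagated from $s=0$ to all of $[0,T]$ inside the bootstrap, not just asserted at initialization; and the hypothesis \eqref{eq:condsb} is exactly what makes the displacement radius $R$ finite, so it should appear explicitly in your bound for $\|\theta_t-\theta_0\|_2$.
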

\begin{proof}
In general to show that a network output can be approximated by some function one has to show that the network output at initialization is equal to the value of the approximating function and that the output during training does not deviate from the evolution of the approximation. Informally, the above proof on the sufficiency of the linear approximation follows from the fact that the training dynamics of the original network do not deviate far from the training dynamics of the linear network. This in turn holds under the NTK convergence, i.e. if $\sup_{t\in[0,T]}||\Theta_t-\Theta_0||_{op}\rightarrow 0$. For the full proof we refer to Appendix E in \cite{lee19}.
\end{proof}
From now on we use the notation for $\hat y_t$ and $\hat y^{lin}_t$ interchangeably if these are equivalent. 
Using the fact that $\hat y_t$ can be approximated by a linear model, we obtain as an approximation to \eqref{eq:nngd} and \eqref{eq:evolution2},
\begin{align}\label{eq:linupd}
\partial_t\theta_t = -\eta (\nabla_\theta\hat y_0)^T\nabla_{\hat y}\mathcal{L}(\hat y_t),\;\;\;\partial_t\hat y_t(X) = -\eta \Theta_0^L \nabla_{\hat y}\mathcal{L}(\hat y_t),
\end{align}
which can be solved to give
\begin{align}\label{eq:sols1}
&\theta_t = \theta_0 - \nabla_\theta\hat y_0(X)^T(\nabla_\theta\hat y_0(X)\nabla_\theta\hat y_0(X)^T)^{-1}(I-e^{-\frac{\eta}{N} \nabla_\theta\hat y_0(X)\nabla_\theta\hat y_0(X)^T t})(\hat  y_0(X)-Y),\\\label{eq:sols2}
&\hat y_t(X) = \left(I-e^{-\frac{\eta}{N} \nabla_\theta\hat y_0(X)\nabla_\theta\hat y_0(X)^Tt}\right)Y+e^{-\frac{\eta}{N} \nabla_\theta\hat y_0(X)\nabla_\theta\hat y_0(X)^Tt}\hat y_0,
\end{align}
where as $n_1,...,n_{L-1}\rightarrow\infty$, sequentially, we have $\nabla_\theta\hat y_0(X)(\nabla_\theta\hat y_0(X))^T\rightarrow\Theta_0$. Observe that, due to the weights being normally distributed at initialization, $\hat y_t(X)$ is thus given by a Gaussian process. We can use the obtained form of the output function in order to gain insight into certain properties of the trained network, such as its generalization capabilities as we will do in Section \ref{sec3}. 

\subsection{The Taylor expansion method}\label{sec23}
In the previous section, under the first-order approximation of the neural network the output under gradient descent (and under stochastic gradient descent, as we will show later on) can be obtained explicitly. In general however this first-order approximation is not sufficient. Under gradient descent, if the number of iterations increases and the network is relatively deep and shallow (even under NTK scaling) higher order approximations are needed. Under stochastic gradient descent, if the noise is sufficiently large, the linear approximation is also no longer sufficient. 

In this section we briefly present the Taylor expansion method \cite{lorig15a}, \cite{lorig17} that will be used to obtain an analytic approximation for the expected value of the weights and network output. The results are presented for a general SDE with state-dependent coefficients, i.e. with the drift and volatility coefficients depending on the SDE itself. 
Consider, 
\begin{align}
dX_t = \mu(t,X_t)dt + \sigma(t,X_t)dW_t.
\end{align}
Here $X$ is a diffusion process with local drift function $\mu(t,x)$ and local volatility function $\sigma(t,x)\geq 0$. 
Assume that we are interested in computing the following function,
\begin{align}\label{eq:fny}
u(t,x) := \mathbb{E}[h(X_T)|X_t=x],
\end{align}
and $h(\cdot)$ is some known function. The expectation is thus taken over the evolution of $X_t$ given the initial value $x$ at time $t$. Note that we obtain the density by setting $h(X_t) = \delta_y(X_t)$, i.e.
\begin{align}
\Gamma(t,x;T,y) = \mathbb{E}[\delta_y(X_T)|X_t=x],
\end{align} 
where $\Gamma(t,x;T,y)dy = \mathbb{P}[X_T\in dy|X_t=x]$. By a direct application of the Feynman-Kac representation theorem, the classical solution of the following Cauchy problem,
\begin{align}\label{eq:cauchy1}
(\partial_t+\mathcal{A}) u(t,x) =0, \;\;\; u(T,x) = h(x),
\end{align}
when it exists, is equal to the function $u(t,x)$ in \eqref{eq:fny}, where $\mathcal{A}$ is the generator related to the SDE and is defined as,
\begin{align}
\mathcal{A} u(t,x) = \mu(t,x)\partial_xu(t,x) + \frac{1}{2}\sigma^2(t,x)\partial_x^2u(t,x).
\end{align}
Due to the state-dependency, the above Cauchy-problem in \eqref{eq:cauchy1} is not explicitly solvable. We follow the work of \cite{lorig15a} and consider solving it through a Taylor-expansion of the operator $\mathcal{A}$. The idea is to choose a (polynomial) expansion $(\mathcal{A}_n(t))_{n\in\mathbb{N}}$ that closely approximates $\mathcal{A}(t)$, i.e. 
\begin{align}\label{eq:exp1}
\mathcal{A}(t,x) = \sum_{n=0}^\infty \mathcal{A}_n(t,x).
\end{align}
The precise sense of this approximation, e.g. pointwise local or global, will depend on the application. The simplest approximation is given by a Taylor expansion of the coefficients of the operator, so that we have
\begin{align}
\mathcal{A}_n(t,x) = \mu_n(t,x)\partial_x + \frac{1}{2}\sigma^2_n(t,x)\partial_x^2,
\end{align}
and 
\begin{align}
&\sigma^2_n(t,x)=\frac{\partial_x^n\sigma^2(t,\bar x)}{n!}(x-\bar x)^n,\;\;\;\mu_n(t,x)=\frac{\partial_x^n\mu(t,\bar x)}{n!}(x-\bar x)^n.
\end{align}
This Taylor expansion can be used under the assumption of smooth, i.e. differentiable, coefficients. 
Following the classical perturbation approach, the solution $u$ can be expanded as an infinite sum,
\begin{align}\label{eq:exp2}
u=\sum_{n=0}^\infty u^n.
\end{align}
The $N$-th order approximation of $u$ is then given by,
\begin{align}
u^{(N)}(t,x) = \sum_{n=0}^N u^n(t,x).
\end{align}
Inserting \eqref{eq:exp1}-\eqref{eq:exp2} into \eqref{eq:cauchy1}, we obtain the following sequence of nested Cauchy problems, for $x\in\mathbb{R}$,
\begin{align}\label{eq:cauchy2}
&(\partial_t+\mathcal{A}_0)u^0(t,x) = 0, \;\;\; u^0(T,x) = h(x),\\
&(\partial_t+\mathcal{A}_0)u^n(t,x) = -\sum_{k=1}^n\mathcal{A}_k(t,x)u^{n-k}(t,x), \;\;\; u^n(T,x) = h(x), \; n>0.
\end{align}
By the results in \cite{lorig15a} we then have the following result for the solution of the Cauchy problem,
\begin{theorem}[Solution for a one-dimensional SDE \cite{lorig15a}]\label{thm1}
Given the Cauchy problem in \eqref{eq:cauchy1}, assuming that $\mu,\sigma\in C^{n}(\mathbb{R})$, 
we obtain for $n=0$,
\begin{align}\label{eq:zero1d}
&u^0(t,x) = \int_\mathbb{R} G_0(t,x;T,y)h(y)dy,\\
&G_0(t,x;T,y)=\frac{1}{ \sqrt{2\pi \int_t^T a_0(s)ds}}\exp\left(-\frac{(y-x-\int_t^T\mu_0(s))^2}{2\int_t^Ta_0(s)ds}\right),
\end{align} 
and the higher order terms for $n>0$ satisfy,
\begin{align}\label{eq:napprox}
&u^n(t,x) = \mathcal{L}_n(t,T) u^0(t,x),\\
&\mathcal{L}_n(t,T) = \sum_{k=1}^n\int_t^Tdt1\int_{t_1}^Tdt_2\dots\int_{t_{k-1}}^Tdt_k\sum_{i\in I_{n,k}}\mathcal{G}_{i_1}(t,t_1)\mathcal{G}_{i_2}(t,t_2)\dots \mathcal{G}_{i_k}(t,t_k),
\end{align}
where 
\begin{align}
I_{n,k} = \{i=(i_1,...,i_k)\in\mathbb{N}^k|i_1+\dots +i_k=n\},
\end{align}
and $\mathcal{G}_n(t,s)$ is an operator
\begin{align}
\mathcal{G}_n(t,s)=\mathcal{A}_n\left(t,x+\int_t^s\mu_0(u)du+\int_t^s a_0(u)du\partial_x\right).
\end{align}
\end{theorem}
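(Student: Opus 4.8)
The plan is to solve the nested Cauchy problems in \eqref{eq:cauchy2} iteratively by Duhamel's principle and then collapse the resulting interleaved products of propagators and perturbation operators into the single closed form \eqref{eq:napprox}. First I would treat the base case $n=0$. Since the coefficients of $\mathcal{A}_0$ are evaluated at $\bar x$ and hence depend only on $t$, the operator $\partial_t+\mathcal{A}_0$ is the backward Kolmogorov operator of the Gaussian diffusion $dX_s=\mu_0(s)\,ds+\sqrt{a_0(s)}\,dW_s$ (with $a_0$ the diffusion coefficient of the leading operator). By Feynman--Kac, $u^0(t,x)=\mathbb{E}[h(X_T)\mid X_t=x]$, and since $X_T=x+\int_t^T\mu_0(s)\,ds+\int_t^T\sqrt{a_0(s)}\,dW_s$ is normal with mean $x+\int_t^T\mu_0(s)\,ds$ and variance $\int_t^T a_0(s)\,ds$, the transition density is exactly the kernel $G_0$ in \eqref{eq:zero1d}. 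Writing $\mathcal{P}_0(t,s)$ for the associated propagator, $[\mathcal{P}_0(t,s)\phi](x)=\int_{\mathbb{R}}G_0(t,x;s,\xi)\phi(\xi)\,d\xi$, this gives $u^0(t,x)=[\mathcal{P}_0(t,T)h](x)$.

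Next I would record two operator identities for $\mathcal{P}_0$. The first is the semigroup (Chapman--Kolmogorov) property $\mathcal{P}_0(t,t_1)\mathcal{P}_0(t_1,s)=\mathcal{P}_0(t,s)$. The second, and the crux of the argument, is the intertwining relation $\mathcal{P}_0(t,s)\,\mathcal{A}_k(s,\cdot)=\mathcal{G}_k(t,s)\,\mathcal{P}_0(t,s)$, with $\mathcal{G}_k$ as defined in the theorem. I would prove it directly from the explicit Gaussian: differentiating $G_0$ gives $\partial_\xi G_0=-\partial_x G_0$ and the key identity $(\xi-\bar x)G_0=(x-\bar x+\int_t^s\mu_0\,du+\int_t^s a_0\,du\,\partial_x)G_0$, i.e. multiplication by $(\xi-\bar x)$ under the integral equals applying the shifted operator in $x$. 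Since the coefficients of $\mathcal{A}_k$ are monomials $c\,(\xi-\bar x)^k$ multiplying $\partial_\xi$ or $\partial_\xi^2$, integrating by parts to move the $\partial_\xi$'s onto $G_0$ and then repeatedly applying these two Gaussian identities converts $\mathcal{A}_k$ acting in $\xi$ into $\mathcal{A}_k$ evaluated at the shifted argument $x+\int_t^s\mu_0\,du+(\int_t^s a_0\,du)\partial_x$, which is precisely $\mathcal{G}_k(t,s)$ standing to the left of $\mathcal{P}_0(t,s)$.

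With these identities in hand I would run the Duhamel recursion. For $n>0$ the problem $(\partial_t+\mathcal{A}_0)u^n=-\sum_{k=1}^n\mathcal{A}_k u^{n-k}$ with zero terminal data has the unique representation $u^n(t,x)=\sum_{k=1}^n\int_t^T[\mathcal{P}_0(t,t_1)\mathcal{A}_k(t_1,\cdot)u^{n-k}(t_1,\cdot)](x)\,dt_1$; here the terminal condition for $n>0$ must be read as $u^n(T,x)=0$ rather than $h(x)$, which is what makes the decomposition $u=\sum_n u^n$ consistent with $u(T,\cdot)=h$. Applying the intertwining relation pushes each $\mathcal{A}_k$ to the left as a $\mathcal{G}_k$, while the semigroup property fuses consecutive propagators; unwinding the recursion down to the base terms $u^0(t_j,\cdot)=\mathcal{P}_0(t_j,T)h$ collapses every chain of propagators into the single $\mathcal{P}_0(t,T)h=u^0(t,x)$. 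An induction on $n$ formalizes this, and the bookkeeping of which products $\mathcal{G}_{i_1}\cdots\mathcal{G}_{i_k}$ survive is exactly the constraint $i_1+\cdots+i_k=n$ with ordered times $t\le t_1\le\cdots\le t_k\le T$, reproducing $\mathcal{L}_n(t,T)$ and hence \eqref{eq:napprox}.

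The main obstacle is the intertwining identity $\mathcal{P}_0(t,s)\mathcal{A}_k=\mathcal{G}_k(t,s)\mathcal{P}_0(t,s)$: one must track the noncommuting operator ordering carefully, since in $\mathcal{G}_k$ the substitution $x\mapsto x+\int_t^s\mu_0+(\int_t^s a_0)\partial_x$ enters the monomial coefficients while the differential part remains to the right, and one must verify that the integration-by-parts boundary terms vanish by Gaussian decay and that the induced factors recombine into exactly the Taylor coefficients $\partial_x^k\mu/k!$ and $\partial_x^k\sigma^2/(2\,k!)$. Convergence of $\sum_n u^n$ and the legitimacy of term-by-term differentiation would be justified under the stated smoothness $\mu,\sigma\in C^n(\mathbb{R})$ using the estimates of \cite{lorig15a}.
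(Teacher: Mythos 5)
The paper does not prove this theorem itself --- it is quoted from \cite{lorig15a} --- but your argument is a correct reconstruction of exactly the proof given in that reference: Duhamel's principle for the nested Cauchy problems, the Gaussian identity $y\,G_0=(x+\int_t^s\mu_0\,du+\int_t^s a_0\,du\,\partial_x)G_0$ yielding the intertwining relation $\mathcal{P}_0(t,s)\mathcal{A}_k(s)=\mathcal{G}_k(t,s)\mathcal{P}_0(t,s)$, and the semigroup property to collapse the chains of propagators so that every surviving term is $\mathcal{G}_{i_1}(t,t_1)\cdots\mathcal{G}_{i_k}(t,t_k)u^0$ with $i_1+\cdots+i_k=n$. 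You also correctly observe that the terminal condition in \eqref{eq:cauchy2} must read $u^n(T,x)=0$ for $n>0$ (as the paper itself writes later in \eqref{eq:cauchynn}); with that reading your proof is complete and matches the source.
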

We can observe that the zeroth-order approximation of the density is given by a Gaussian, and the subsequent orders are given by differential operators applied to the Gaussian density. 
For example, the first-order approximation is given by $\Gamma_1(t,x;T,y) = \int_t^Tds \mathcal{G}_1(t,s)\Gamma_0(t,x;T,y)$. Note that, $\Gamma_0(t,x;T,y)$ is the unique non-rapidly increasing solution of the backward Cauchy problem $(\partial_t+\mathcal{A}_0)\Gamma_0(t,x;T,y)=0$ with $\Gamma_0(T,x;T,y) = \delta_y(x)$. This in turn corresponds to the density of $dX^0_t = \mu_0 dt + \sigma_0 dW_t$. 


\begin{remark}[Convergence of the approximation]\label{remarkconv}
As obtained in Theorem 4.4 in \cite{lorig15a}, under certain regularity and smoothness assumptions on the coefficients, for small times $T$ convergence as $N\rightarrow\infty$ can be obtained. 
This result does not imply convergence for larger times $T$ due to the possibility of the bound exploding in the limit as $N\rightarrow\infty$. However, in e.g. \cite{lorig15a}, \cite{lorig17} the authors use the approximation for large times $T$ and obtain accurate results. Nevertheless, the approximation works best for times close to $t_0$, i.e. close to initialization of the network. The further away from initialization the more approximation terms are needed to obtain satisfactory results. 
\end{remark}

\subsection{An approximation of the network output}\label{sec24}
In this section we focus on deriving an analytic expression for the network weights and output evolution using the Taylor expansion method presented in Section \ref{sec23}. We focus here on deriving expressions for the scalar case of the weights, i.e. for $d=1$ so that $\theta_t\in\mathbb{R}$. In this case, the $N$-th order Taylor expansion of the network output around $\bar\theta$ can be written as, 
\begin{align}\label{eq:taylor1d}
\hat y_t^{(N)}(X) := \sum_{n=0}^N \frac{\partial_\theta^n \hat y_t(X)}{n!}\bigg |_{\bar\theta}(\theta_t-\bar\theta)^n.
\end{align}
The aim is to obtain an analytic expression for the expected value of this $N$-th order approximation. This requires solving for $\mathbb{E}[(\theta_t-\bar\theta)^m]$, $m=0,...,N$. 
We assume that the evolution of $\theta_t$ is governed by the following continuous approximation of the stochastic optimization dynamics,
\begin{align}\label{eq:sdeevol1}
d\theta_t = -\eta (\partial_\theta\hat y_t)^T \nabla_{\hat y}\mathcal{L}(\hat y_t)dt + \sigma dW_t.
\end{align}
Note that this is similar to \eqref{eq:sgdsde} but with a general noise component $\sigma$. In general, the above SDE, due to the intricate dependencies in the drift term, is not directly solvable. By making use of the approximation method as presented in Section \ref{sec23} we can obtain expressions for the expected network output.
Define, 
\begin{align}\label{eq:mudef}
\mu(\theta_t) := -\eta(\partial_\theta\hat y_t(X))^T\nabla_{\hat y}\mathcal{L}(\hat y_t)=-\frac{\eta}{N}(\partial_\theta\hat y_t(X))^T(\hat y_t(X)-Y).
\end{align}
Define,
\begin{align}\label{eq:weightapprox}
u_m(t_0,\theta):=\mathbb{E}\left[(\theta_t-\bar\theta)^m\right|\theta_{t_0}=\theta],
\end{align}
Note again that the expected value is taken over the evolution of $\theta_t$ given a particular initialization $\theta_{t_0}$. We have, similar to \eqref{eq:cauchy1},
\begin{align}
(\partial_t+\mathcal{A}) u_m(t,\theta) =0, \;\;\; u_m(T,\theta) = (\theta-\bar\theta)^m,
\end{align}
where the generator for the SDE in \eqref{eq:sdeevol1} is given by,
\begin{align}
\mathcal{A}(t,\theta) = \mu(\theta)\partial_\theta + \frac{1}{2}\sigma^2\partial_\theta^2,
\end{align}
with $\mu(\cdot)$ as in \eqref{eq:mudef}. Similar to the previous section we want to find an approximation of the generator of the form,
\begin{align}\label{eq:genapp}
\mathcal{A}(t,\theta) \approx \sum_{n\geq 0}\mathcal{A}_n:=\sum_{n\geq 0} \mu_n(\theta)\partial_\theta + \caratt_{n=0}\frac{1}{2}\sigma\partial^2_\theta,
\end{align}
with $\mu_n(\theta):=\bar\mu_n (\theta-\bar\theta)^n$. 
In this case, instead of applying a direct Taylor approximation to the coefficient $\mu(\theta)$, we use the Taylor expansion of the network output in \eqref{eq:taylor1d} in order to obtain a polynomial expansion of the above form for the generator. Inserting \eqref{eq:taylor1d} in \eqref{eq:mudef}, we obtain 
\begin{align}\label{eq:muexp2}
\mu_n(\theta) := 
&-\frac{\eta}{N}\sum_{\substack{k\leq N-1,j\leq N:\\
                  k+j=n}} \frac{1}{k!}(\partial_\theta^{k+1}\hat y_t(X)|_{\bar\theta} )^T\left(\frac{1}{j!}\partial_\theta^{j}\hat y_t(X)|_{\bar\theta}-\caratt_{k=n}Y\right)(\theta-\bar\theta)^n, \;\; n=0,...,2N-1.
\end{align}
Note that in $\hat y_t$ the time-dependency arises from $\theta_t$, so that $\hat y_t|_{\bar\theta}$ defines the time-fixed output with weights fixed at $\bar\theta$. Define the approximation of \eqref{eq:weightapprox} corresponding to the $N$-th order expansion in $\hat y$ as 
\begin{align}
u_m^{(N)}=\sum_{n=0}^{2N-1} u_m^n,
\end{align} 
 Then, we have the following set of Cauchy problems for $u_m^n$, $n=0,...,2N-1$,
\begin{align}\label{eq:cauchynn}
&(\partial_t+\mathcal{A}_0)u_m^0 = 0, \;\;\; u_m^0(T,\theta)=(\theta-\bar\theta)^k,\\
&(\partial_t+\mathcal{A}_0)u_m^n = -\sum_{k=1}^n\mathcal{A}_k u_m^{n-k}, \;\;\; u_m^n(T,\theta)=0, \;\; n=1,...,2N-1.
\end{align}
To summarize, suppose that one wants to obtain an expression for the $N$-th order approximation of the expected value of $\hat y_t$. This requires the approximation of $u_0,...,u_N$. For each $u_k$ one then has to solve $2N-1$ Cauchy problems, as given by \eqref{eq:cauchynn}. The difference with Section \ref{sec23} is thus that we do not directly expand the generator, but expand the generator through an expansion of $\hat y$. This means that for different $N$, the $\mu_0,...,\mu_{2N-1}$ will be different; having fixed $N$, the obtained solutions $u_m^0,...,u_m^{2N-1}$ cannot be reused for a different $N$. 
Since the methodology presented in Section \ref{sec24} holds for a general expansion basis, we can apply the results of Theorem \ref{thm1} to obtain the following result. 

\begin{theorem}[Analytic expressions for the weights and output]\label{cor1}
Fix $N\in\mathbb{N}$ and consider the $N$-th order approximation of $\hat y_t$. The expected value of $\hat y^{(N)}_t$ is then given by, 
\begin{align}
\mathbb{E}\left[\hat y_t^{(N)}(X)|\hat y_{0}\right] = \sum_{n=0}^N\frac{\partial_\theta^n\hat y_t(X)}{n!}\bigg |_{\bar\theta}u_m^{(N)}(t_0,\theta),
\end{align} 
where the corresponding $2N-1$-th order approximation of $u_m$ is given by,
\begin{align}
u_m^{(N)}=\sum_{n=0}^{2N-1} u_m^n,
\end{align}
with
\begin{align}
&u_m^0(t_0,\theta) = \partial_s^k\exp\left(\left(-\frac{\eta}{N}\mu_0 (t-t_0) +\theta-\bar\theta\right)s+\frac{1}{2}\sigma^2(t-t_0)s^2\right)\big|_{s=0},\\
&u^{n}_m(t_0,\theta)=\mathcal{L}_n(0,t) u_m^0(t_0,\theta).
\end{align}
\end{theorem}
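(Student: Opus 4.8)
The plan is to reduce the statement to a direct application of Theorem~\ref{thm1} to each moment of $\theta_t-\bar\theta$, once I have checked that the generator expansion induced by the Taylor expansion of the output is of the polynomial form that theorem requires. First I would take the expectation of the $N$-th order Taylor expansion \eqref{eq:taylor1d}. The coefficients $\partial_\theta^n\hat y_t(X)|_{\bar\theta}/n!$ are deterministic, since all the randomness resides in $\theta_t$, so linearity of expectation yields
\begin{align}
\mathbb{E}[\hat y_t^{(N)}(X)\,|\,\hat y_0]=\sum_{n=0}^N\frac{\partial_\theta^n\hat y_t(X)}{n!}\bigg|_{\bar\theta}\,u_n(t_0,\theta),
\end{align}
with $u_n(t_0,\theta)=\mathbb{E}[(\theta_t-\bar\theta)^n\mid\theta_{t_0}=\theta]$ as in \eqref{eq:weightapprox}. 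This isolates the problem: it suffices to produce the claimed approximation for each single moment $u_m$, $m=0,\dots,N$.

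Next I would write the Feynman--Kac / Cauchy problem \eqref{eq:cauchy1} for each $u_m$, with terminal data $h(\theta)=(\theta-\bar\theta)^m$ and the generator $\mathcal{A}$ of the SDE \eqref{eq:sdeevol1}. The decisive preparatory step, which I expect to be the main obstacle, is to verify that inserting the expansion \eqref{eq:taylor1d} of $\hat y_t$ into the drift \eqref{eq:mudef} produces exactly the polynomial generator expansion \eqref{eq:genapp}, i.e.\ that $\mu(\theta)$ reorganizes into $\sum_n\mu_n(\theta)$ with $\mu_n(\theta)=\bar\mu_n(\theta-\bar\theta)^n$. This is a Cauchy-product bookkeeping argument: $\partial_\theta\hat y_t$ is a polynomial of degree $N-1$ in $(\theta-\bar\theta)$ while $\hat y_t-Y$ is a polynomial of degree $N$, so their product collects into the powers $(\theta-\bar\theta)^n$ for $n=0,\dots,2N-1$, giving the coefficients displayed in \eqref{eq:muexp2}. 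The points requiring care are the summation range $k\le N-1$, $j\le N$, $k+j=n$, the fact that the label $Y$ attaches only to the $\caratt_{k=n}$ term, and that the diffusion coefficient is state-independent so that the $\tfrac{1}{2}\sigma^2\partial_\theta^2$ piece sits entirely in $\mathcal{A}_0$. Once the generator matches the hypotheses of Theorem~\ref{thm1}, the nested Cauchy problems \eqref{eq:cauchynn} are precisely its perturbation cascade, truncated at order $2N-1$ because the drift expansion terminates there.

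Finally I would apply Theorem~\ref{thm1} with $h(\theta)=(\theta-\bar\theta)^m$ and the time correspondence $t\mapsto t_0$ (current time) and $T\mapsto t$ (terminal time), consistent with \eqref{eq:weightapprox}. The zeroth-order equation $(\partial_t+\mathcal{A}_0)u_m^0=0$ has constant coefficients, so its solution is the Gaussian expectation $u_m^0(t_0,\theta)=\int_\mathbb{R}G_0(t_0,\theta;t,y)(y-\bar\theta)^m\,dy$, namely the $m$-th moment of a Gaussian with mean $(\theta-\bar\theta)-\tfrac{\eta}{N}\mu_0(t-t_0)$ and variance $\sigma^2(t-t_0)$. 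Extracting this moment from the moment generating function gives
\begin{align}
u_m^0(t_0,\theta)=\partial_s^m\exp\left(\left(-\tfrac{\eta}{N}\mu_0(t-t_0)+\theta-\bar\theta\right)s+\tfrac{1}{2}\sigma^2(t-t_0)s^2\right)\bigg|_{s=0},
\end{align}
and the higher-order corrections are read off directly from \eqref{eq:napprox} as $u_m^n=\mathcal{L}_n(0,t)u_m^0$. Summing $u_m^{(N)}=\sum_{n=0}^{2N-1}u_m^n$ and substituting back into the first display completes the argument. The only residual item is to confirm the regularity needed for Theorem~\ref{thm1}, namely $\mu,\sigma\in C^{2N-1}(\mathbb{R})$; this is automatic here since the truncated drift is polynomial and $\sigma$ is constant, provided the activation $h$ is smooth enough that the derivatives $\partial_\theta^n\hat y_t|_{\bar\theta}$ exist up to order $N$.
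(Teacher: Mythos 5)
Your proposal is correct and follows essentially the same route as the paper: reduce to Theorem~\ref{thm1} via the Cauchy problems \eqref{eq:cauchynn}, identify the zeroth-order solution as the moment of a Gaussian with mean $\theta-\bar\theta-\frac{\eta}{N}\mu_0(t-t_0)$ and variance $\sigma^2(t-t_0)$ extracted from the moment generating function, and obtain the higher-order terms from \eqref{eq:napprox}. The additional bookkeeping you supply (linearity of expectation over the deterministic Taylor coefficients, and the verification that the drift reorganizes into the polynomial expansion \eqref{eq:muexp2}) is carried out in the paper in the text of Section~\ref{sec24} preceding the theorem rather than inside the proof itself.
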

\begin{proof}
Observe that we have by the result in Theorem \ref{thm1} it follows that, 
\begin{align}
u^0_m(t_0,\theta) &= \mathbb{E}\left[(\theta_t^0-\bar\theta)^k|\theta_{t_0}\right],
\end{align}
where $\theta_t^0$ follows a Gaussian distribution with mean $\theta-\frac{\eta}{N}\mu_0 (t-t_0)$ and variance $\sigma^2(t-t_0)$. Then, $\theta_t^0-\bar\theta$ follows a Gaussian distribution with mean $\theta-\bar\theta-\frac{\eta}{N}\mu_0 (t-t_0)$ and variance $\sigma^2(t-t_0)$. 
Using the fact that the Gaussian moments are known through the characteristic exponential, we obtain
\begin{align}
u^0_m(t_0,\theta) = \partial_s^me^{\left(-\frac{\eta}{N}\mu_0 (t-t_0) +\theta-\bar\theta\right)s+\frac{1}{2}\sigma^2(t-t_0)s^2}\big|_{s=0}.
\end{align}
The expression for the higher order terms then follows from \eqref{eq:napprox}.
\end{proof}
\begin{example}[The second-order approximation of $\hat y_t$]\label{ex1}
In this example we present the analytic expression for the second-order approximation of $\hat y_t$:
\begin{align}
\hat y_t^{(2)} = \hat y_t(X)|_{\bar\theta} + \partial_\theta\hat y_t(X)|_{\bar\theta}(\theta_t-\bar \theta)+\frac{1}{2}\partial^2_\theta\hat y_t(X)|_{\bar\theta}(\theta_t-\bar \theta)^2.
\end{align}
The corresponding approximation of the generator is given by \eqref{eq:genapp} with,
\begin{align}
&\mu_0= -\frac{\eta}{N}\partial_\theta \hat y^T(\hat y-Y),\;\;
\mu_1(\theta) =\bar\mu_1(\theta-\bar\theta):=- \frac{\eta}{N}\left(\partial^2_\theta \hat y^T(\hat y-Y)+\partial_\theta\hat y ^T\partial_\theta\hat y\right)(\theta-\bar\theta),\\
&\mu_2(\theta) =\bar\mu_2(\theta-\bar\theta)^2:= -\frac{\eta}{N}\left(\frac{1}{2}\partial_\theta\hat y^T\partial_\theta^2\hat y+\partial_\theta^2\hat y^T\partial_\theta\hat y\right)(\theta-\bar\theta)^2,\;\;\mu_3(\theta) =\bar\mu_3(\theta-\bar\theta)^3:= -\frac{\eta}{N}\frac{1}{2}\partial_\theta^2\hat y^T\partial_\theta^2\hat y(\theta-\bar\theta)^3,
\end{align}
where we have used the shorthand notation $\hat y$ to represent $\hat y_t(X)|_{\bar\theta}$. 
For the second-order approximations of the first two moments, we can obtain the following results:
\begin{align}
u^0_1(t_0,\theta) =& \theta-\bar\theta-\bar\theta_0(t-t_0),\\
u^1_1(t_0,\theta) =& \bar\mu_1(\theta-\bar\theta)(t-t_0)+\frac{1}{2}\bar\mu_1\mu_0(t-t_0)^2,\\
u^2_1(t_0,\theta)=&\bar\mu_2(\theta-\bar\theta)^2(t-t_0)+\bar\mu_2\mu_0(\theta-\bar\theta)(t-t_0)^2+\frac{1}{3}\mu_0^2(t-t_0),\\
u^0_2(t_0,\theta) =& (\theta-\bar\theta+\bar\mu_0(t-t_0))^2+\sigma^2(t-t_0),\\
u^1_2(t_0,\theta) =& 2\bar\mu_1(\theta-\bar\theta)^2(t-t_0)-2\bar\mu_1(\theta-\bar\theta)\mu_0(t-t_0)^2+\frac{1}{2}\bar\mu_1\mu_0(t-t_0)^2(\theta-\bar\theta)\\
&-\frac{1}{2}\bar\mu_1\mu_0^2(t-t_0)^3+\frac{1}{2}\bar\mu_1\sigma^2(t-t_0)\\
u^2_2(t_0,\theta)=&2\bar\mu_2(\theta-\bar\theta)^3(t-t_0)+2\bar\mu_2\mu_0(\theta-\bar\theta)^2(t-t_0)^2+2\bar\mu_2\mu_0(\theta-\bar\theta)^2(t-t_0)^2+2\bar\mu_2\mu_0^2(\theta-\bar\theta)(t-t_0)^3\\
&+\frac{2}{3}\mu_0^2(t-t_0)(\theta-\bar\theta) +\frac{2}{3}\mu_0^3(t-t_0)^2 + \bar\mu_2(\theta-\bar\theta)\sigma^2(t-t_0)^2+\frac{2}{3}\mu_0\sigma^2(T-t)^3+\bar\mu_1^2(\theta-\bar\theta)^2(t-t_0)\\
&+\bar\mu_1^2\mu_0(\theta-\bar\theta)(t-t_0)^3+\frac{1}{3}\bar\mu_1(\theta-\bar\theta)\sigma^2(t-t_0)^3+\frac{1}{4}\mu_1^2\mu_0^2(t-t_0)^4+\frac{1}{8}\bar\mu_1^2\mu_0\sigma^2(t-t_0)^4.
\end{align}
The third-order formulas are too long to include but can easily be computed in a similar way.
\end{example}

\section{The effects of the optimization algorithm}\label{sec3}
In this section, using the analytical expressions for the network output evolution obtained in the previous section we study the effects of the optimization algorithm, in particular that of the number of training iterations, the learning rate and the noise variance from the stochastic optimization, on the network output and the networks' generalization capabilities. 
\subsection{Generalization}\label{sec31}
Generalization is the relationship between a trained networks' performance on train data versus its performance on test data. Having good generalization is a highly desirable property for neural networks, where ideally the performance on the train data should be similar to the performance on similar but unseen test data. In general, the generalization error of a neural network model $\hat y(x,w)$ can be defined as the failure of the hypothesis $\hat y(x,w)$ to explain the dataset sample. It is measured by the discrepancy between the true error $\mathcal{L}$ and the error on the sample dataset $\mathcal{\hat L}$, 
\begin{align}\label{eq:genbound}
\mathcal{L}(\hat y(X)) - \mathcal{\hat L}(\hat y(X)),
\end{align}
which for good generalization performance should be small. Typically, a trained network is able to generalize well when it is not overfitting on noise in the train dataset. Various metrics for studying the networks ability to generalize have been defined. 
 
One way of analyzing the generalization capabilities of the neural network is to understand the evolution in the weight space. In particular, if we assume that flat minima generalize better \cite{hochreiter97}, an algorithm that converges to these flat minima will have better out-of-sample performance than one that converges to a sharp one. In the work of \cite{zhu19} it is shown that isotropic noise is not sufficient for escaping sharp minima; a noise covariance matrix proportional to the loss curvature is shown to be better at escaping sharp minima. The authors of \cite{simsekli19} use a metastability analysis to give insight into the structure of the minima where the optimization algorithm terminates. An alternative way of analyzing the SGD trajectories is to use the invariant distribution of the SGD SDE, see e.g. \cite{mandt17}. The downside of this approach is that it assumes that the algorithm has converged to this invariant distribution, which might not be the case.

In this work, we use the idea that good generalization is related to the network having a sufficient robustness to noise. In this setting, the derivatives of the network output or the loss function have been proposed as empirical metrics that measure the smoothness and noise resistance of the function, see e.g. \cite{borovykh19}, \cite{novak18}, \cite{sokolic17}. 
One such metric is the Hessian $H^\theta(\mathcal{L})\in\mathbb{R}^{d\times d}$ of the loss function with respect to the weights, with elements $h_{ij}^\theta=\partial_{w_i}\partial_{w_j}\mathcal{L}(\hat y(X))$. This measures the flatness of the minimum in weight space and can be related to theoretical quantities that measure generalization capabilities such as low information \cite{hochreiter97} or the Bayesian evidence \cite{smith18}. Alternatively, the input Jacobian $J^x(\hat y)$ with elements $j_{i}^x= \partial_{x_i} \hat y(x)$, or input Hessian $H^x(\mathcal{L})\in\mathbb{R}^{n_0\times n_0}$ with $h_{ij}^x = \partial_{x_i}\partial_{x_j} \mathcal{L}(\hat y(X))$ can be used to gain insight into input noise robustness of the network \cite{novak18}. These metrics are related to the smoothness of the output function. Obtaining good generalization is then related to a correct trade-off between the smoothness of the neural network output function and its ability to fit the training data (see e.g. \cite{tishby15}).

The generalization capabilities of a network are influenced by the training data, the network architecture and the optimization algorithm. In particular related to the latter, previous work has observed that the noise in noisy training can contribute to better generalization capabilities \cite{kenton17}, \cite{seong18}, \cite{borovykh19}, \cite{chaudhari18}, \cite{li19gen}, \cite{zhu19}, \cite{simsekli19}. In the remainder of this work we will focus on studying the effect noise has on the train error and function smoothness, as measured by the derivatives with respect to input or weights. 

\subsection{The first-order approximation}\label{sec32}
In this section we present the results of the optimization algorithm on the network output during full-batch gradient descent and stochastic training in case the linear model is a sufficient approximation. The linear model output during training allows to be solved for explicitly without the need for the Taylor expansion method. 
\subsubsection{Convergence speed and generalization capabilities}
Consider the convergence speed with which $\hat y_t(X)$ converges to the true labels $Y$. Let $\Theta_0=\sum_{i=1}^N \lambda_i v_iv_i^T$, and let $v_1,...,v_N\in\mathbb{R}^N$ be the orthonormal eigenvectors of $\Theta_0$ and $\lambda_1,...,\lambda_N$ the corresponding eigenvalues. Similar to the results obtained in \cite{arora19} for a two-layer network, one can obtain, through the eigen-decomposition of $\Theta_0$, the following result:
\begin{lemma}[Convergence speed]\label{lem1} 
Consider an $n^*$ such that $||\hat y^{lin}-\hat y||<\epsilon$ for some small enough $\epsilon$ if $n_1,...,n_{L-1}>n^*$. Assume $\lambda_0:=\lambda_{\min}(\Theta_0^L)>0$. 
\begin{align}\label{eq:convterm}
||\hat y_t(X)-Y||_2^2 = \sum_{i=1}^N e^{-2\frac{\eta}{N}\lambda_i t} \left(v_i^T \left(\hat y_0 - Y\right)\right)^2.
\end{align}
\end{lemma}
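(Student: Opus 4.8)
The plan is to start from the closed-form solution for the linear-model output during gradient descent, given in \eqref{eq:sols2}, and to reduce the claimed identity to a diagonalization of a matrix exponential in the eigenbasis of $\Theta_0$. The whole statement is really a spectral rewriting of the explicit solution already derived in Section \ref{sec2}, so the work is in setting up the reduction cleanly rather than in any new analytic estimate.

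First I would invoke Lemma \ref{lem01}: for $n_1,\dots,n_{L-1} > n^*$ the network output $\hat y_t$ agrees with its linear approximation up to $\epsilon$, so it suffices to work with the explicit solution \eqref{eq:sols2}. In this regime the Gram matrix $\nabla_\theta\hat y_0(X)\nabla_\theta\hat y_0(X)^T$ coincides with $\Theta_0$ by Lemma \ref{lem0}, and the assumption $\lambda_0=\lambda_{\min}(\Theta_0)>0$ guarantees that $\Theta_0$ is invertible, so that the closed form \eqref{eq:sols1}--\eqref{eq:sols2} is well defined. I would then subtract $Y$ from \eqref{eq:sols2}; the two $Y$-terms combine as $(I-e^{-\frac{\eta}{N}\Theta_0 t})Y - Y = -e^{-\frac{\eta}{N}\Theta_0 t}Y$, collapsing the residual to the compact form
\begin{align}
\hat y_t(X) - Y = e^{-\frac{\eta}{N}\Theta_0 t}\,(\hat y_0 - Y).
\end{align}

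Next I would use the spectral decomposition $\Theta_0 = \sum_{i=1}^N \lambda_i v_i v_i^T$ with the $v_i$ orthonormal. Since the rank-one projectors $v_i v_i^T$ are mutually orthogonal and idempotent, the matrix exponential diagonalizes as $e^{-\frac{\eta}{N}\Theta_0 t} = \sum_{i=1}^N e^{-\frac{\eta}{N}\lambda_i t}\, v_i v_i^T$, whence
\begin{align}
\hat y_t(X) - Y = \sum_{i=1}^N e^{-\frac{\eta}{N}\lambda_i t}\left(v_i^T(\hat y_0 - Y)\right) v_i.
\end{align}
Taking the squared Euclidean norm and using $v_i^T v_j = \delta_{ij}$ to annihilate the cross terms yields \eqref{eq:convterm} immediately.

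The computation itself is routine once \eqref{eq:sols2} is in hand; the only point requiring genuine care is the passage from the finite-width objects in \eqref{eq:sols2} to the limiting kernel $\Theta_0$ and its eigenbasis, and this is precisely what the two hypotheses control. The threshold $n^*$ bounds the linearization error so that the explicit solution may legitimately be used, while the positivity of $\lambda_0$ does double duty: it ensures the invertibility of $\Theta_0$ needed for the closed form to hold, and it forces every spectral mode $e^{-2\frac{\eta}{N}\lambda_i t}$ to decay, so that the identity is an honest statement about convergence speed rather than a formal rearrangement. I expect this bookkeeping—keeping track of the $\epsilon$ from the linear approximation versus the exact spectral identity for $\Theta_0$—to be the only subtle part of the argument.
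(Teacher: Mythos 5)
Your proposal is correct and follows essentially the same route as the paper's proof: subtract $Y$ from the closed form \eqref{eq:sols2} to get $\hat y_t(X)-Y=e^{-\frac{\eta}{N}\Theta_0 t}(\hat y_0-Y)$, diagonalize the matrix exponential in the orthonormal eigenbasis of $\Theta_0$, and take the squared norm so the cross terms vanish. The extra bookkeeping you add about the linearization threshold $n^*$ and the invertibility role of $\lambda_0>0$ is sensible but does not change the argument.
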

\begin{proof}
Note that from \eqref{eq:sols2} it holds,
\begin{align}
\hat y_t(X) - Y = e^{-\frac{\eta}{N} \Theta_0 t}\left(\hat y_0 - Y\right).
\end{align}
Observe that the exponential has the same eigenvectors as $\Theta_0$ but with eigenvalues given by $e^{-\frac{\eta}{N}\lambda_i t}$ instead of $\lambda_i$. Then we find, using the decomposition of $(\hat y_0-Y)$,
\begin{align}
\hat y_t(X) - Y = \sum_{i=1}^N e^{-\frac{\eta}{N}\lambda_i t} \left(v_i^T \left(\hat y_0 - Y\right)\right)v_i.
\end{align}
The statement then follows by taking the squared norm. 
\end{proof}
The MSE is thus a function of the eigenvalues $\lambda_i$, the iteration number $t$ and the projection of the eigenvectors on the labels $v_i^T Y$. The convergence speed is governed by the right-hand side: the faster the right-hand term converges to zero as $t$ increases, the faster the convergence of $\hat y(X)$ to $Y$. Convergence is faster along the directions corresponding to the larger eigenvalues.

In previous work, e.g. \cite{srebro17}, \cite{srebro17opt}, it was shown that gradient descent performs some form of implicit regularization. Due to this, the solution obtained by gradient descent generalizes well, since it can be shown to be the lowest-complexity solution in some notion of complexity. More specifically, in \cite{srebro17}, the authors show that optimizing a matrix factorization problem with gradient descent with small enough step sizes converges to the minimum nuclear norm solution.
Similarly, in \cite{zhang16} it is shown that (S)GD on a \emph{linear} model converges to a solution which has the minimum $L_2$-norm. Gradient descent on the matrix factorization and the linear network thus contains an implicit regularization, resulting in a solution that is minimal in some norm. Here we show that the solution to which gradient descent converges on a deep and infinitely wide neural network is also minimal in terms of the $L_2$ norm.
\begin{lemma}[Minimum norm solution]\label{lem02}
Consider an $n^*$ such that $||\hat y^{lin}-\hat y||<\epsilon$ for some small enough $\epsilon$ if $n_1,...,n_{L-1}>n^*$. Gradient descent in deep and wide non-linear networks converges to the minimum norm solution, i.e. 
\begin{align}
\theta_t\rightarrow  {\arg\min}_{\hat y_0 + \nabla_\theta \hat y_0(\theta-\theta_0) = Y}||\theta-\theta_0||_2. \end{align}
\end{lemma}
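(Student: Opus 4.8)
The plan is to extract the $t\to\infty$ limit of the explicit weight trajectory \eqref{eq:sols1} and identify it with the orthogonal-projection characterization of the minimum-norm interpolant. Write $J:=\nabla_\theta\hat y_0(X)\in\mathbb{R}^{N\times d}$ for the Jacobian at initialization and $r_0:=\hat y_0(X)-Y$ for the initial residual; under the hypothesis $\lambda_0=\lambda_{\min}(\Theta_0^L)>0$ the Gram matrix $JJ^T$ (which converges to $\Theta_0$ as $n_1,\dots,n_{L-1}\to\infty$) is invertible. Equation \eqref{eq:sols1} then reads $\theta_t-\theta_0=-J^T(JJ^T)^{-1}(I-e^{-\frac{\eta}{N}JJ^T t})r_0$, and since every eigenvalue of $JJ^T$ is at least $\lambda_0$ we have $\|e^{-\frac{\eta}{N}JJ^T t}\|_{op}\le e^{-\frac{\eta}{N}\lambda_0 t}\to0$. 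Hence the trajectory converges, $\theta_t-\theta_0\to\delta^\star:=-J^T(JJ^T)^{-1}r_0$ as $t\to\infty$.

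Next I would verify that $\theta_0+\delta^\star$ is both feasible and optimal for the constrained problem. Feasibility is a direct computation: $\hat y_0+J\delta^\star=\hat y_0-JJ^T(JJ^T)^{-1}r_0=\hat y_0-r_0=Y$, so the linearized interpolation constraint $\hat y_0+\nabla_\theta\hat y_0(\theta-\theta_0)=Y$ holds. For optimality, observe that $\delta^\star\in\operatorname{range}(J^T)=\ker(J)^\perp$. Any other feasible $\delta$ satisfies $J(\delta-\delta^\star)=0$, i.e. $\delta-\delta^\star\in\ker(J)$, which is orthogonal to $\delta^\star$; Pythagoras then gives $\|\delta\|_2^2=\|\delta^\star\|_2^2+\|\delta-\delta^\star\|_2^2\ge\|\delta^\star\|_2^2$, with equality only when $\delta=\delta^\star$. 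Thus $\theta_0+\delta^\star$ is the unique minimizer, and the limit of gradient descent coincides with it.

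The algebra above is exact for the linear surrogate; the remaining and most delicate step is transferring it to the genuine deep, wide network. Here I would invoke Lemma \ref{lem01} together with the width threshold $n^*$: for $n_1,\dots,n_{L-1}>n^*$ the true dynamics track the linearized dynamics \eqref{eq:linupd}, so the true limiting weights lie within $O(\epsilon)$ of $\delta^\star$. The main obstacle is that Lemma \ref{lem01} controls the deviation only uniformly on a \emph{finite} horizon $[0,T]$, whereas the claim concerns the limit $t\to\infty$; one must therefore argue that the width limit and the time limit may be interchanged, equivalently that the NTK stays close to $\Theta_0$ for \emph{all} $t$ so that the nonlinear trajectory does not slowly drift out of $\operatorname{range}(J^T)$ over the infinite horizon. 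The exponential residual decay of Lemma \ref{lem1}, with rate controlled by $\lambda_0>0$, is what makes this uniform-in-time control plausible, but turning the finite-$T$ statement into an infinite-time one is the point that requires care.
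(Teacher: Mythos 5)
Your proposal follows essentially the same route as the paper: take the $t\to\infty$ limit of the explicit trajectory \eqref{eq:sols1} and identify the limit with the minimum-norm interpolant of the linearized problem \eqref{eq:linappgd}; the paper compresses your feasibility check and Pythagoras argument into the phrase ``by simple linear algebra.'' Your closing caveat --- that Lemma \ref{lem01} only controls the deviation on a finite horizon $[0,T]$ and one must justify interchanging the width and time limits --- is a genuine subtlety the paper's proof passes over silently, so your version is, if anything, the more complete one.
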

\begin{proof}
Note that, as $t\rightarrow\infty$, using the linear differential equation for the function evolution we obtain from \eqref{eq:sols1},
\begin{align}
\theta_t \rightarrow \theta_0 - \nabla_\theta\hat y_0(X)^T(\nabla_\theta\hat y_0(X)\nabla_\theta\hat y_0(X)^T)^{-1}(\hat  y_0(X)-Y).
\end{align}
By simple linear algebra this solution is equivalent to the minimum $L_2$-norm solution of the linear regression problem in \eqref{eq:linappgd}.
\end{proof}

In other words, the weights that the network converges to when trained with gradient descent are such that their distance to the initial weights is the smallest among all weights that satisfy $\lim_{t\rightarrow\infty} \hat y_t=Y$. 
Remark that also for other network architectures the solution found by gradient descent would be the minimum-norm solution for that particular parametrization. Similar to the work in \cite{gunasekar18} it is then of interest to understand the implicit bias in the predictor space. The minimum-norm property could give intuition into why neural networks trained with gradient descent can regularize well in cases where the noise in the data is minimal.  
Since the solution with $L_2$-norm fits the training data with zero error, if significant amount of noise is present in the target points $y_1,...,y_N$, the solution will be overly complex. In order to understand generalization we require a metric that measures the complexity, or smoothness, of the output as a function of the input data. 

As mentioned in Section \ref{sec31} a metric that is used in deep neural networks to study the generalization and robustness to input noise is the input Jacobian. The Jacobian measures the smoothness of the output function with respect to input perturbations and its size correlates well with the generalization capabilities \cite{novak18}. Conveniently, using the analytic expressions in \eqref{eq:sols2} and the fact that the output is thus given by a Gaussian process, we can obtain the model output and its sensitivity at any point $x^*$. We have, using the convergence to the kernel $\Theta_0$ as $n_1,...,n_{L-1}\rightarrow\infty$,
\begin{align}\label{eq:newpointpred}
\hat y_t(x^*) = \hat y_0(x^*) - \Theta_0(x^*,X)\Theta_0^{-1}(I-e^{-\eta \Theta_0 t})(\hat y_0(X)-Y).
\end{align}
Note that this differs from the output if the model were trained in a fully Bayesian way due to $\Theta_0$ not being equal to the kernel $k$. The Jacobian with respect to $x^*$ is given by,
\begin{align}\label{eq:newpointjac}
J_x = \hat y_0'(x^*) - \Theta_0'(x^*,X)\Theta_0^{-1}(I-e^{-\eta \Theta_0 t})(\hat y_0(X)-Y).
\end{align}
The two hyperparameters related to the optimization algorithm, $\eta$ and $t$, influence the size of this Jacobian, i.e. if $t$ and $\eta$ are small, the input Jacobian is small resulting in a smoother, more robust solution. 

\subsubsection{The regularization effects of gradient descent}\label{sec322}
Consider gradient descent over the mean squared error (MSE) loss with a regularization term, i.e. 
\begin{align}
\mathcal{L}(X,\hat y_t,Y)=\frac{1}{2N}||\hat y(X)-Y||_2^2 + \frac{\lambda}{2} ||\theta^L-\theta_0||_2^2,
\end{align} 
In this section we want to understand when and how gradient descent results in a smoothed and regularized solution. 
Applying GD to the loss function we obtain,
\begin{align}\label{eq:regtr}
&\theta_{t+1}^L = \theta_t^L -\frac{\eta}{N}(\nabla_\theta\hat y_t)^T(\hat y_t-Y)-\frac{\eta}{N}\lambda (\theta_t-\theta_0).
\end{align}
For the continuous approximation for the evolution of $\hat y_t$ we obtain,
\begin{align}
&\partial_t \hat y_t =  - \frac{\eta}{N}\Theta_0(\hat y_t-Y)-\frac{\eta}{N}\lambda (\hat y_t - \hat y_0).
\end{align}
We have used informally that when training the network under this loss function, the convergence of the NTK should still hold. 
Solving the continuous forms of these expressions for $\hat y_t(X)$ we obtain, 
\begin{align}\label{eq:solreg}
\hat y_t(X) =
e^{-\frac{\eta}{N}(\Theta_0+\lambda)t}\hat y_0(X)+ (\Theta_0Y+\lambda \hat y_0)\left(\Theta_0+\lambda\right)^{-1}\left(I-e^{-\frac{\eta}{N}(\Theta_0+\lambda)t}\right).
\end{align}
As $t\rightarrow\infty$ the expression obtained for $\hat y(X)$ is similar to the posterior of a Gaussian process where the likelihood is Gaussian with variance $\lambda$, i.e. assuming that there is $\lambda$ noise over the observations in $Y$. As we can observe from the above expression, the convergence is slowed down by the regularization coefficient $\lambda$, so that early stopping leads to smoother solutions than the ones without regularization. At the same time, as $t\rightarrow\infty$, the solution does not converge to $Y$, but to a solution with more smoothness (as observed by a smaller Jacobian when $\lambda$ increases). In case the labels $Y$ contain a significant amount of noise, full convergence is not desirable. If the network is fully converged the network output can be an overly complex function with a poor generalization ability. Regularization prevents this from happening, resulting in a smoother function which is consequently more robust to input perturbations. 

While this result is intuitive it gives insight into the effects of regularization on the function evolution and could form the basis of understanding which optimization algorithm implicitly assumes a particular amount of noise. Note that training with a regularization term is similar to adding $\mathcal{N}(0,\sqrt{\lambda})$-noise to the inputs $X$ during training \cite{reed92}. Therefore, the regularization effect occurs when noise is added to a non-linearly transformed variable, here the input. 

\subsubsection{Gradient descent with stochasticity}\label{sec323}
As has been mentioned in previous work, the noise in SGD can benefit the generalization capabilities of neural networks. As observed in e.g. \cite{kenton18}, \cite{borovykh19}, \cite{kenton17}, \cite{smith18}, \cite{smith19}, \cite{chaudhari18}, \cite{park19} a relationship exists between the test error and the learning rate and batch size, both of which control the variance of the noise, used in the SGD updating scheme. In this section we analyze theoretically where this improvement comes from in order to quantify the effects of noise in a network trained in the \emph{lazy regime}, i.e. when the first-order approximation is sufficient. We analyze the behavior of the network under noisy training by solving explicitly for the output evolution as a function of time $t$, so that, unlike in derivations where the invariant or stationary distribution is used (e.g. \cite{mandt17}), our derivations also give insight into the finite-time behavior of the network output. 

Consider the continuous time approximation of the training dynamics in \eqref{eq:sgdsde}. The evolution of the output function, by It\^o's lemma is then given by, 
\begin{align}
d\hat y_t = \left(-\frac{\eta}{N}\nabla_\theta\hat y_t(\nabla_\theta\hat y_t)^T(\hat y_t-Y)+\frac{1}{2}\sigma^2\frac{\eta^2}{M} Tr(\Delta_\theta\hat y_t(x))_{x\in\mathcal{X}}\right)dt + \frac{\eta}{\sqrt{M}}\sigma \nabla_\theta\hat y_t dW_t.
\end{align}
Under certain assumptions on the SGD training dynamics, the neural network output can be approximated by its first-order approximation,
\begin{align}\label{eq:linappsgd}
\hat y_t \approx \hat y^{lin}_t = \hat y_0 + \nabla_\theta\hat y_0(\theta_t-\theta_0).
\end{align}
 Informally, this holds if the evolution of the original network under SGD does not deviate from the evolution of the linearized network under SGD, which in turn holds if the noise and/or the Hessian are/is sufficiently small. This in turn, by arguments similar to Appendix F in \cite{lee19}, holds if 
 \begin{align}\label{eq:convsgd}
 \sup_{t\in[0,T]} \bigg|\bigg|\Theta_t+\frac{1}{2}\sigma^2\frac{\eta^2}{M} Tr(\Delta_\theta\hat y_t(x))_{x\in\mathcal{X}}-\Theta_0\bigg|\bigg|_{op}\rightarrow 0.
 \end{align}
Assuming that this convergence holds we obtain,
\begin{align}
d\hat y_t = -\frac{\eta}{N}\nabla_\theta \hat y_0(\nabla_{\theta} \hat y_0)^T (\hat y_t - Y)dt +\frac{\eta}{\sqrt{M}} \sigma \nabla_\theta \hat y_0 dW_t,
\end{align}
where we have used that for the linear model approximation $\Delta_\theta \hat y_t=0$.
We obtain, using the limiting behavior of the kernels, 
\begin{align}\label{eq:solsgd}
\hat y_t(X) = \left(I-e^{-\frac{\eta}{N}\Theta_0 t}\right)Y+e^{-\frac{\eta}{N} \Theta_0t}\hat y_0(X)-\sigma\frac{\eta}{N} \int_0^te^{-\frac{\eta}{N} \Theta_0(t-s)}\Theta_0dW_s.
\end{align}
We can study the error between the network output and the true labels in a similar way as in Lemma \ref{lem1}. We have the following result,
\begin{lemma}[Expected MSE for noisy training]\label{lem2}
Consider $n_1,...,n_{L-1}\rightarrow\infty$, so that the deep neural network evolution is governed by the NTK $\Theta_0$. It holds that,
\begin{align}\label{eq:expmse}
\mathbb{E}\left[||\hat y_t - Y||_2^2\right] = \mathbb{E}\left[||e^{-\frac{\eta}{N}\Theta_0 t}(\hat y_0-Y)||_2^2\right] + \sigma^2\frac{\eta^2}{N^2} \int_0^t \sum_{i=1}^N\sum_{j=1}^N \left(\left[e^{-\frac{\eta}{N}\Theta_0(t-s)} \Theta_0\right]_{ij}\right)^2dt.
\end{align}
\end{lemma}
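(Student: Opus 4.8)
The plan is to start from the explicit solution \eqref{eq:solsgd} and reduce the claim to a single application of the It\^o isometry. First I would subtract $Y$ from both sides of \eqref{eq:solsgd}. Since $(I-e^{-\frac{\eta}{N}\Theta_0 t})Y-Y=-e^{-\frac{\eta}{N}\Theta_0 t}Y$, this $Y$-term combines with the $\hat y_0$-term into a single exponential factor, giving
\begin{align}\label{eq:plan1}
\hat y_t(X)-Y=e^{-\frac{\eta}{N}\Theta_0 t}\left(\hat y_0(X)-Y\right)-\sigma\frac{\eta}{N}\int_0^t e^{-\frac{\eta}{N}\Theta_0(t-s)}\Theta_0\,dW_s.
\end{align}
I would denote the first, initialization-dependent, term by $A$ and the stochastic integral by $B$, so that $\hat y_t-Y=A+B$.

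Next I would expand the squared norm as $\|A+B\|_2^2=\|A\|_2^2+2\langle A,B\rangle+\|B\|_2^2$ and take expectations term by term, where the expectation runs over both the random initialization and the Brownian motion. The key structural point is that in the limit $n_1,\dots,n_{L-1}\to\infty$ the kernel $\Theta_0$ is \emph{deterministic}, so the integrand of $B$ carries no dependence on the initialization and $A$ is a function of $\hat y_0$ alone. Conditioning on the initialization, $A$ is then fixed and $\mathbb{E}[B\mid\hat y_0]=0$ because the It\^o integral has a deterministic integrand; hence $\mathbb{E}[\langle A,B\rangle\mid\hat y_0]=\langle A,\mathbb{E}[B\mid\hat y_0]\rangle=0$, and the tower property kills the cross term. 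This leaves $\mathbb{E}[\|\hat y_t-Y\|_2^2]=\mathbb{E}[\|A\|_2^2]+\mathbb{E}[\|B\|_2^2]$, whose first summand is exactly the first term of \eqref{eq:expmse}.

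It then remains to evaluate $\mathbb{E}[\|B\|_2^2]$. Here I would apply the vector-valued It\^o isometry to $B=-\sigma\frac{\eta}{N}\int_0^t M(s)\,dW_s$ with $M(s)=e^{-\frac{\eta}{N}\Theta_0(t-s)}\Theta_0$, obtaining
\begin{align}\label{eq:plan2}
\mathbb{E}\left[\|B\|_2^2\right]=\sigma^2\frac{\eta^2}{N^2}\int_0^t\|M(s)\|_F^2\,ds,
\end{align}
and then recognize the squared Frobenius norm as $\|M(s)\|_F^2=\sum_{i=1}^N\sum_{j=1}^N\big([e^{-\frac{\eta}{N}\Theta_0(t-s)}\Theta_0]_{ij}\big)^2$, which is precisely the double sum appearing in the second term of \eqref{eq:expmse}.

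The main obstacle is not computational but a matter of careful probabilistic bookkeeping: one must justify treating $\Theta_0$ as deterministic (so that the integrand of $B$ is independent of the initialization and the cross term genuinely vanishes), and one must set up the It\^o isometry for a matrix-valued integrand driven by a multi-dimensional Brownian motion so that the Frobenius norm emerges naturally. Once these two points are in place, the remaining manipulations are routine.
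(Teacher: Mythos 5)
Your proposal is correct and follows essentially the same route as the paper's own proof: subtract $Y$ from \eqref{eq:solsgd}, eliminate the cross term using the independence of $\hat y_0$ from the Brownian motion together with the vanishing expectation of the It\^o integral, and apply the multi-dimensional It\^o isometry to identify the resulting double sum as a squared Frobenius norm. Your write-up is merely more explicit about the probabilistic bookkeeping (conditioning on the initialization and the determinism of $\Theta_0$ in the infinite-width limit) than the paper's brief argument.
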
 
\begin{proof}
From \eqref{eq:solsgd} we obtain,
\begin{align}
\hat y_t - Y = e^{-\frac{\eta}{N}\Theta_0 t} (y_0-Y) -\int_0^te^{-\frac{\eta}{N}\Theta_0(t-s)}\sigma\frac{\eta}{N} \Theta_0dW_s.
\end{align} 
Using the multi-dimensional It\^o Isometry and 
using the fact that $\hat y_0$ and $W_t$ are uncorrelated and $\mathbb{E}[\hat y_0] = 0 $ and that the expectation of an It\^o integral is zero we obtain the statement.
\end{proof}
Observe that in the one-dimensional case $N=1$ as $t\rightarrow\infty$, we have
\begin{align}
\mathbb{E}\left[||\hat y_t - Y||_2^2\right]\rightarrow \frac{1}{2}\sigma^2 \eta\Theta_0.
\end{align}
From Lemma \ref{lem2} we observe that the stochasticity in noisy gradient descent can result in slower convergence and even in the limit $t\rightarrow\infty$ the MSE may not fully converge on the train data. 

Consider the weight Hessian $H^\theta(X):=\Delta_\theta \mathcal{L}(X, \hat y_t, Y)$ as a metric of smoothness in weight space. We observe that if the network output can be approximated by the linear model in \eqref{eq:linappgd}, 
\begin{align}\label{eq:weighthesslazy}
\mathbb{E}\left[H^\theta(X)\right] =\frac{1}{N} (\nabla_\theta\hat y_0)^T  \nabla_\theta \hat y_0.
\end{align}
The optimization method with noise added to the weight updates -- with the noise given by a scaled Brownian motion -- thus does not add a direct regularization effect, compared to e.g. training with regularization in \eqref{eq:regtr}. In particular, the expected value of the weight Hessian is not affected by the noise coefficient $\sigma$. While the noise thus does not regularize or smooth the network output directly, as happens when using e.g. gradient descent with regularization (see \eqref{eq:solreg}), noisy training in the lazy regime can result in a higher MSE on the train set due to the noise keeping the output from fully converging on the train data. This in turn might lead to a smaller generalization error. 
We remark, that in general for a larger number of training iterations or for a larger noise variance the convergence in \eqref{eq:convsgd} does not hold. Thus under SGD training with a sufficient amount of noise in the weight updates, the model is not given by a linear model and noise does have an effect on the weight Hessian. This is observed in e.g. \cite{smith19}, where even under NTK scaling with SGD training the authors observe better generalization error. The reason this occurs is due to the fact that the model is no longer equivalent to a linear model, in which case -- as we will see in the next section -- noise can decrease the weight Hessian and thus has a regularizing effect on the network output. 

\subsection{Higher-order approximations}\label{sec33}
We are interested in understanding the effects on the output, and in particular the generalization performance, of the noise variance $\sigma^2$ on the $N$-th order approximation of the network output. In the Section \ref{sec24} we were able to obtain an analytic expression for the $N$-th order approximation of the network output. By the properties of the Taylor expansion, the remainder of the $N$-th order Taylor approximation, if the derivatives of the output are bounded, is a function of $(\theta_t-\bar\theta)^{N+1}$. The approximation is thus most accurate around $\bar\theta$. If we set $\bar\theta=\theta$, with $\theta$ the value of the weight at initialization, then in a small interval around initialization at $t_0$ the linear approximation might be sufficient; as training progresses, we may require higher order terms in order to approximate the network output. We remark here that a Taylor series need not be convergent; nevertheless in an interval around initialization the approximation is accurate enough (see Remark \ref{remarkconv}). 

In Section \ref{sec2} we showed that if the network output is \emph{linear} in the weights, noise does not have an explicit effect on the function smoothness, as observed from the weight Hessian in \eqref{eq:weighthesslazy} being unaffected by the noise. From Theorem \ref{cor1} we observe that unlike in the first-order approximation of $\hat y_t(X)$, for higher order approximations, i.e. a setting in which the output is \emph{non-linear} in the weights, the noise has an effect on the expected value. In order to quantify generalization through a heuristic metric, we will again consider the weight Hessian $H^\theta$ of the loss function of the approximated output function $\hat y_t^{(N)}$. A low sensitivity to weight perturbations, i.e. a smaller Hessian, would imply better generalization.
Observe that in the one-dimensional weight setting, using the approximated output function $\hat y_t^{(N)}$, we have 
\begin{align}
\mathbb{E}\left[H^\theta(X)\right] =& \frac{1}{N}\sum_{\substack{k\leq N-2,j\leq N:\\
                  k+j=n}} \frac{1}{k!}(\partial_\theta^{k+1}\hat y_t(X)|_{\bar\theta} )^T\left(\frac{1}{j!}\partial_\theta^{j}\hat y_t(X)|_{\bar\theta}-\caratt_{k=n}Y\right)(\theta-\bar\theta)^{k+j}\\
                  &+\frac{1}{N}\sum_{\substack{k\leq N-1,j\leq N-1:\\
                  k+j=n}} \frac{1}{k!}(\partial_\theta^{k+1}\hat y_t(X)|_{\bar\theta} )^T\frac{1}{j!}\partial_\theta^{j}\hat y_t(X)|_{\bar\theta}(\theta-\bar\theta)^{k+j}.
\end{align}
We observe from the above expression that the Hessian depends on $(\theta_t-\bar\theta)^m$, $m=0,1,...,2N-2$ and thus in order to obtain insight into the structure of the Hessian, we require the estimates $u_m^n$ from Theorem \ref{cor1}. 

From the analytic expressions for the weight moments it can be observed that the noise variance $\sigma$ can result in a positive or negative addition, where negativity occurs due to the multiplication with the drift which is negative if the gradient is positive, to the weight Hessian. In particular, under $\bar \theta = \theta$, in the one-dimensional case the general structure of the terms is,
\begin{align}
&\left(-\frac{\eta}{N}\partial_\theta^j\hat y^T(\hat y-Y)\right)^k(t-t_0)^m, \;\;\;\; \left(-\frac{\eta}{N}\partial_\theta^j\hat y^T\partial_\theta^l\hat y\right)^k(t-t_0)^m,\\
&\left(-\frac{\eta}{N}(\partial_\theta^{j}\hat y)^T(\partial_\theta^{l}\hat y-Y)\right)^k\sigma^n (t-t_0)^m,\;\;\;\; \left(-\frac{\eta}{N}(\partial_\theta^{j}\hat y^T\partial_\theta^{l}\hat y\right)^k\sigma^n (t-t_0)^m,
\end{align}
for some $j,k,l,n,m\in\mathbb{N}$. 

While the interplay between noise and flatness is intricate, the above formulas show that for the noise to decrease the value of the Hessian, $k$ needs to be uneven and the noise needs to be scaled such that it is `aligned' with the first- and higher-order derivatives of the loss surface. There seems to exist a balance such that a too large noise can overshoot and result in a too negative or too positive Hessian, while a sufficient amount of noise can decrease the Hessian and result in a regularization of the output function. Furthermore, the effects of the noise variance are dampened by $(t-t_0)$ if this is small. This is related to the above observation that around initialization the linear approximation, on which noise has no effect, may be sufficient. Therefore, noise does not seem to have a significant effect on the output at the beginning of training, and the impact increases as training proceeds and the network output deviates from the linear model. 

From the above formulas, in the one-dimensional setting the size of the noise variance $\sigma$ needs to `match' the gradient of the loss surface. A similar observation can be said to hold in the multi-dimensional case, where in order for the noise to have a regularizing effect, the noise variance in a particular direction needs to be `match' the gradient in that direction. This is similar to the results in \cite{zhu19} where the authors observed that the noise needs to be `aligned' with the derivatives of the loss surface in order to increase generalization capabilities. An isotropic covariance matrix, i.e. one in which the noise is of the same size in all directions, can thus decrease the Hessian in certain directions while increasing it in others. We leave the multi-dimensional study for future work, since the full-scale analysis is beyond the scope of this work. Nevertheless, our analytical expressions are able to provide relevant insights into the effects of the noise on the generalization capabilities. To conclude this section, for network which is \emph{non-linear} in the weights, if the generalization capabilities of the network output can be quantified by the smoothness of the loss surface, this smoothness, as observed from the analytic expressions, is dependent on the noise variance and the derivatives of the output and loss function at initialization. 

\section{Numerical results}\label{sec4}
In this section we present numerical results that validate the theoretical observations made in the previous sections empirically. We consider here the setting of time series forecasting (regression) so that a number of previous time steps is used to predict the next-step-ahead value. Previous work on generalization typically considers the classification setting for which slightly different results may hold. In particular worth mentioning is the following: in classification even after full convergence on the train set the test error might continue to change due to a changing margin. In particular this means that minima with zero train error might still be 'smooth' in terms of their complexity and have a good out-of-sample performance. In the regression setting we typically observe that if a zero regression error is obtained the trained function will be of high complexity and thus result in a large test error. Nevertheless, the effects of noise and other hyperparameters are for a large part similar for both classification and regression. 

In particular we use two time series: a sine function with noise given by $y_i=\sin(0.1t_i)+c\epsilon_i$ with $c=0.3$, $\epsilon\sim\mathcal{N}(0,1)$ and $t_i\in\{0,1,...,100\}$ and the daily minimum temperature in Melbourne, Australia. The sine dataset consists of 100 train and 100 test point, while the weather dataset consists of 600 train and 100 test points. The network uses $k$ historical datapoints $t_{n-k},...,t_n$ to predict $t_{n+1}$, where $k=5$ for the sine function and $k=20$ for the temperature dataset. The network is trained to minimize the MSE loss. Unless otherwise mentioned, the network consists of $L=5$ layers with a ReLU activation and $n_l=200$ nodes per layer \footnote{The number of nodes in the NTK regime should be such that the model is close to the linear model. The deeper the network, the wider the layers should be to achieve NTK convergence. The choice of 200 nodes per layer with five layers seems to be sufficient for the network to be close to the linearized network.}, the number of training iterations is set to $n_{its} = 10.000$, the learning rate is set to $\eta=0.01$ in the non-lazy regime and $\eta=1$ in the lazy regime with $\beta=0.1$. 
The results are presented averaged over 20 minima; these minima are obtained by running the optimization algorithm 20 times from a fixed initialization with random noise; the sine function noise is sampled per minimum. 
\subsection{Approximating the neural network output}
In this section we approximate the neural network output with the Taylor expansion from \eqref{eq:taylor1d}. We train the neural network with full-batch gradient descent. We compute the divergence between the true network output and its first and second order approximation at each point of training. The results are presented for the sine dataset, where we use the output for one train observation to measure the squared divergence  i.e. $\left(\hat y_t(x^1)-\hat y_t^{(N)}(x^1)\right)^2$, for $N=1,2$. We observe from Figures \ref{fig01}-\ref{fig02} that for wide networks under the NTK scaling the divergence is small; in other words, under the NTK scaling for sufficiently wide networks the first order approximation is sufficient. For smaller widths or larger iteration numbers the divergence increases and adding the second-order term can significantly improve the approximation. Similarly for the regular scaling, adding higher order terms improves the quality of the approximation. To conclude, adding higher-order terms can improve the approximation in both the NTK and regular regime. The linear approximation is only accurate in the wide network with NTK scaling and a relatively small number of training iterations; higher order terms are needed in order to approximate the network output for narrow neural networks or for large iteration numbers. 

\begin{figure}[H]
  \centering
  \includegraphics[width=0.35\textwidth]{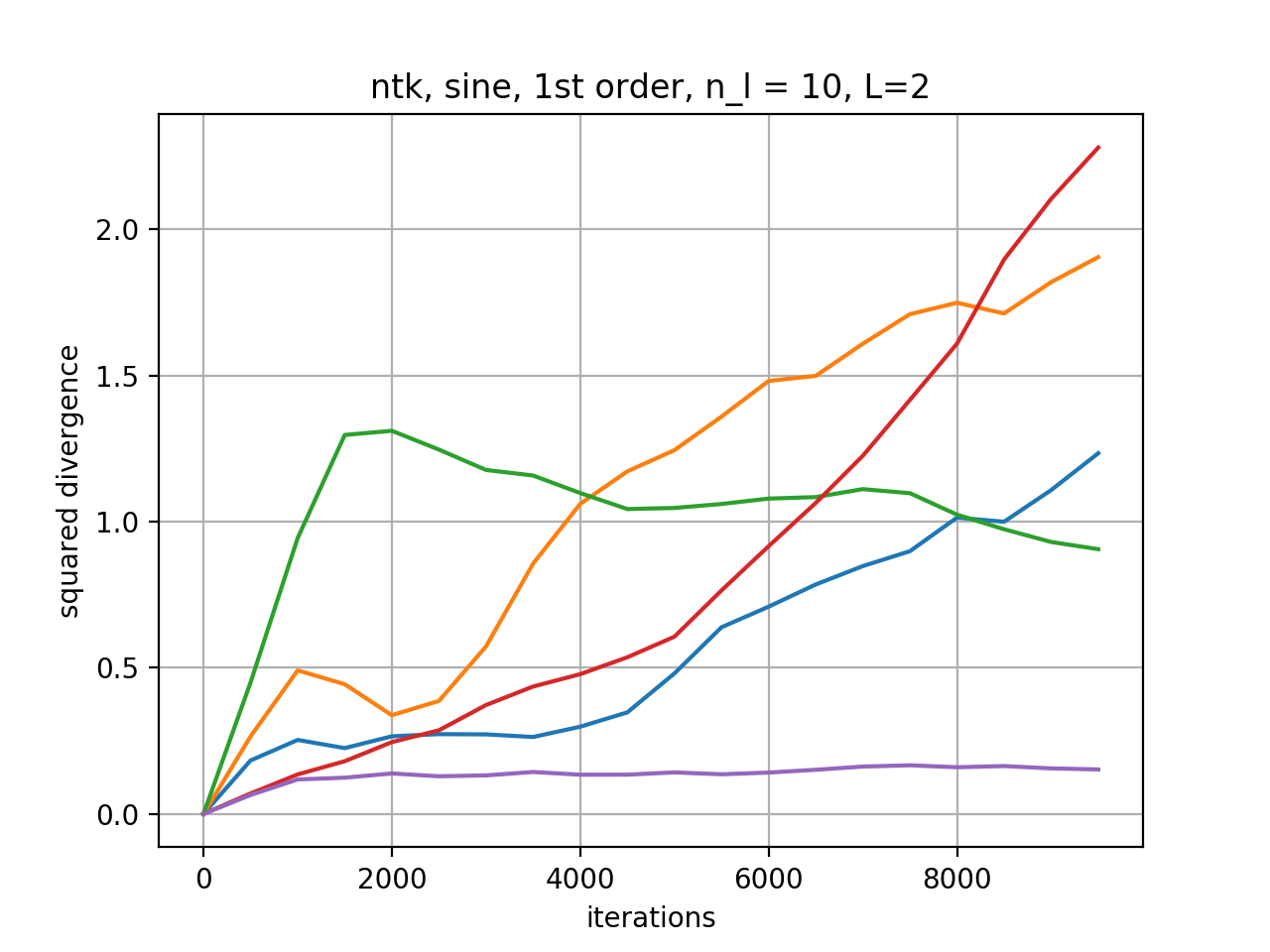} 
    \includegraphics[width=0.35\textwidth]{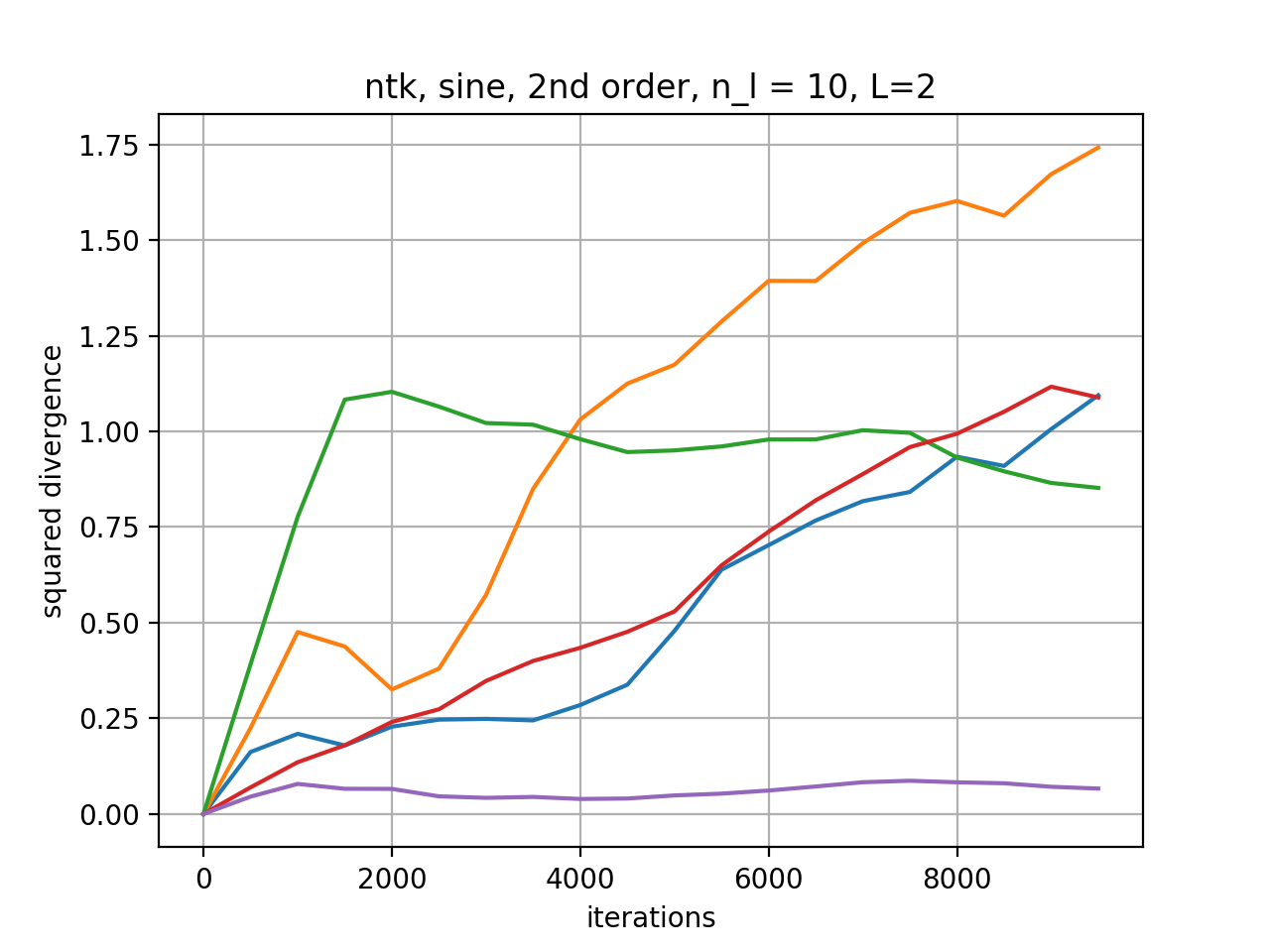} \\
      \includegraphics[width=0.35\textwidth]{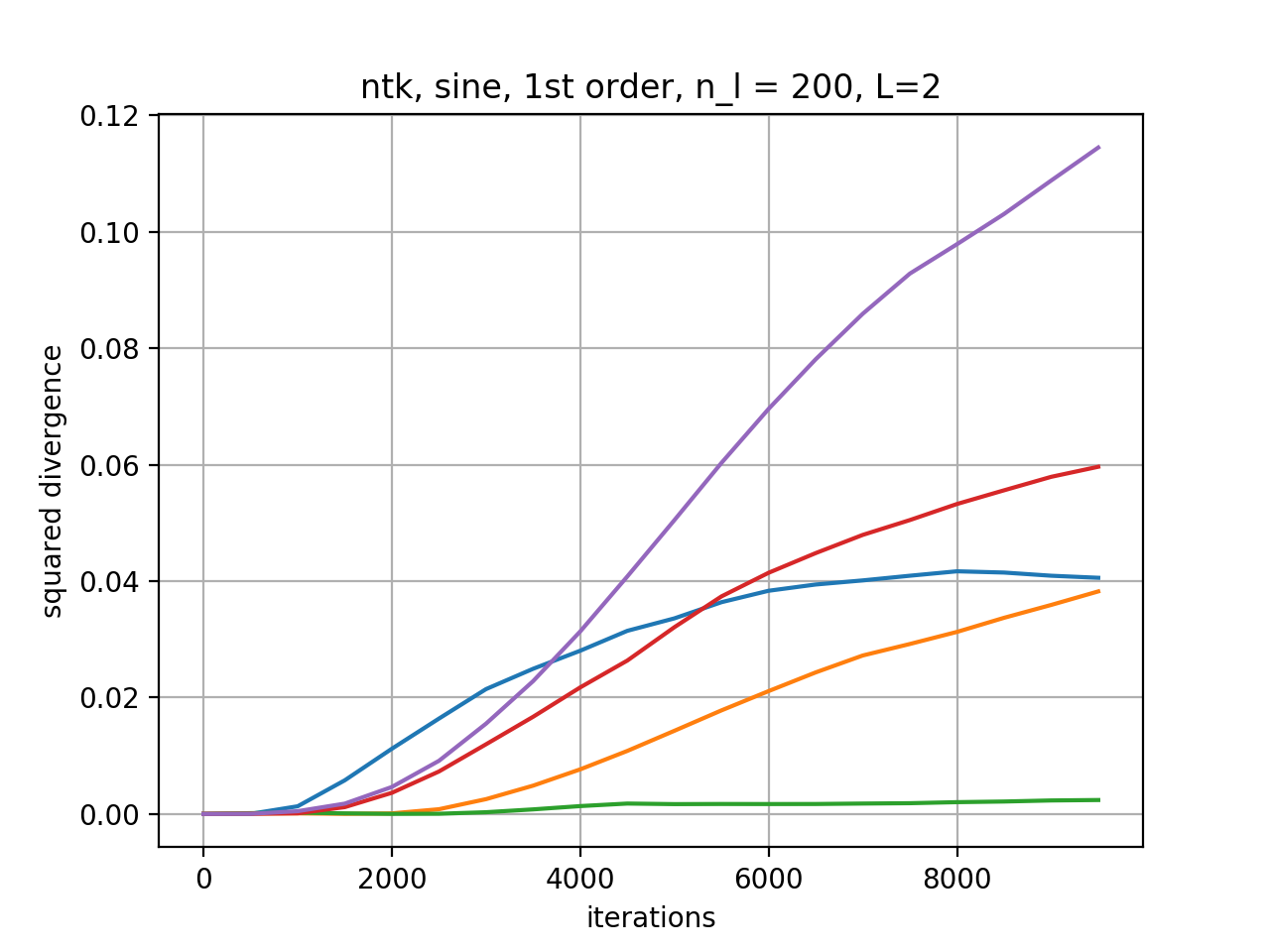}
  \includegraphics[width=0.35\textwidth]{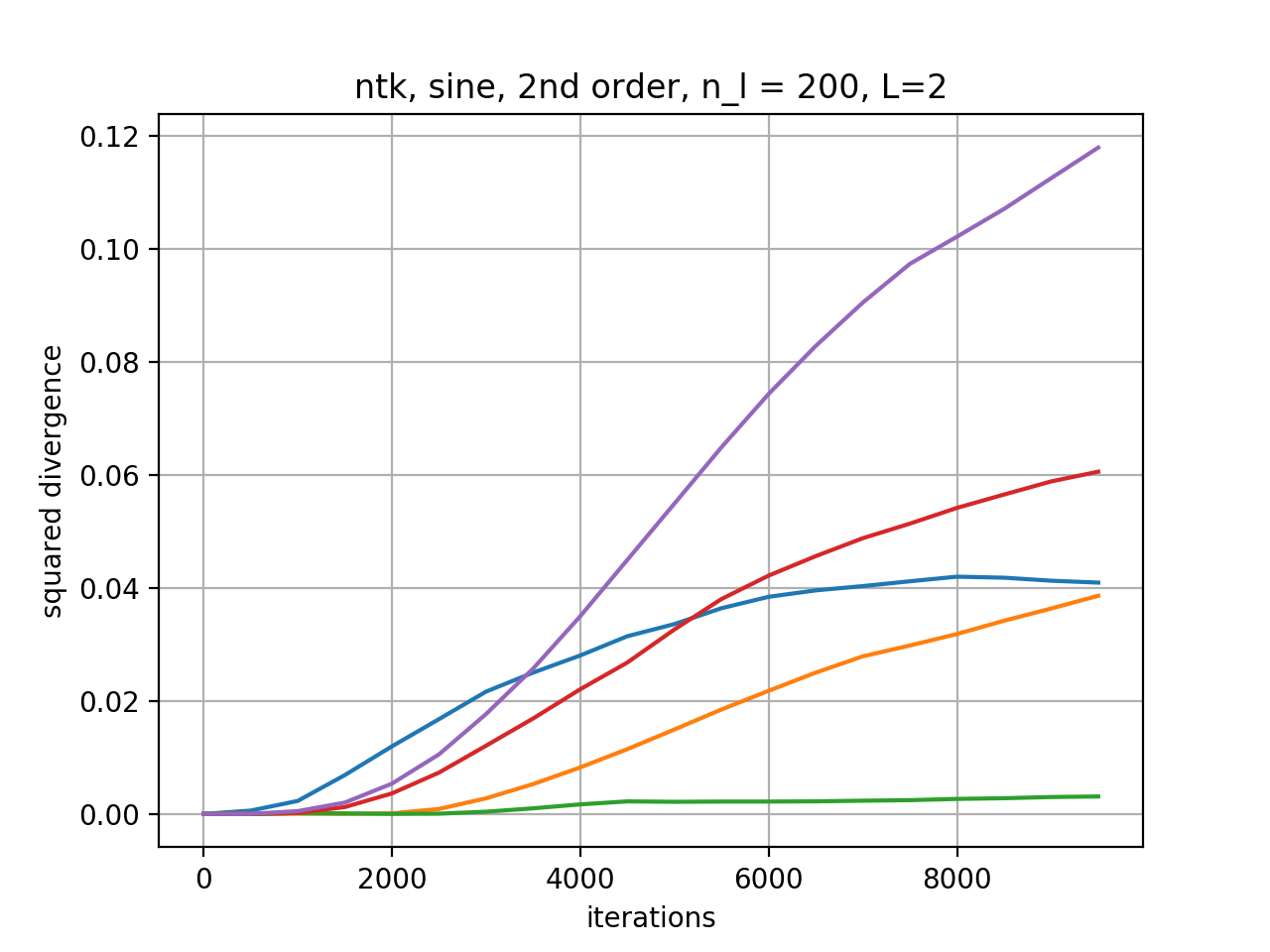}
        \caption{Five simulations of the divergence between the neural network output and the Taylor approximation of the model for NTK scaling. A larger number of iterations results in higher divergence, while increasing layer width decreases divergence; when the divergence is already small adding higher-order Taylor expansion terms does not significantly improve the approximation.}\label{fig01}
\end{figure}

\begin{figure}[H]
  \centering
  \includegraphics[width=0.35\textwidth]{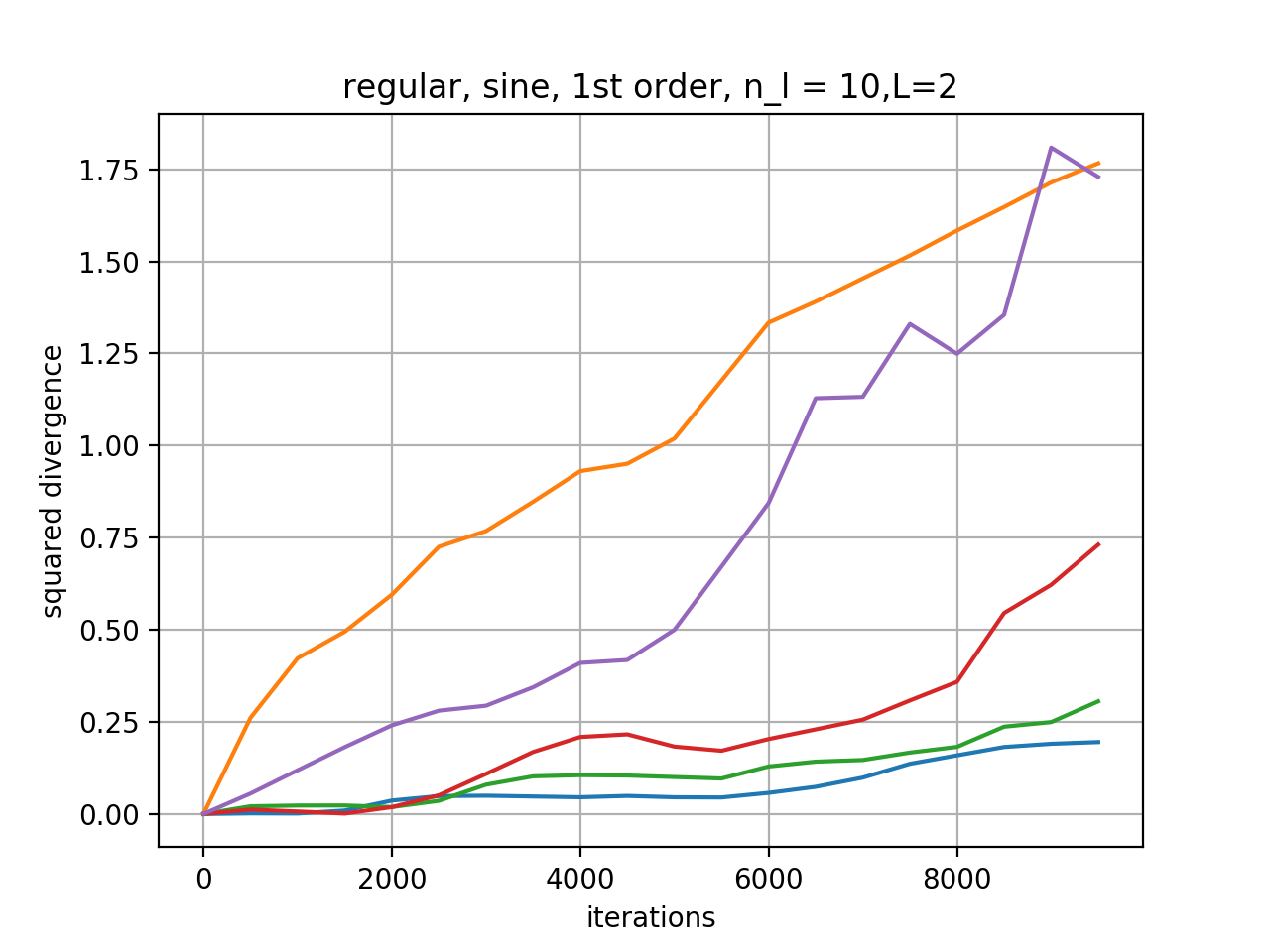} 
    \includegraphics[width=0.35\textwidth]{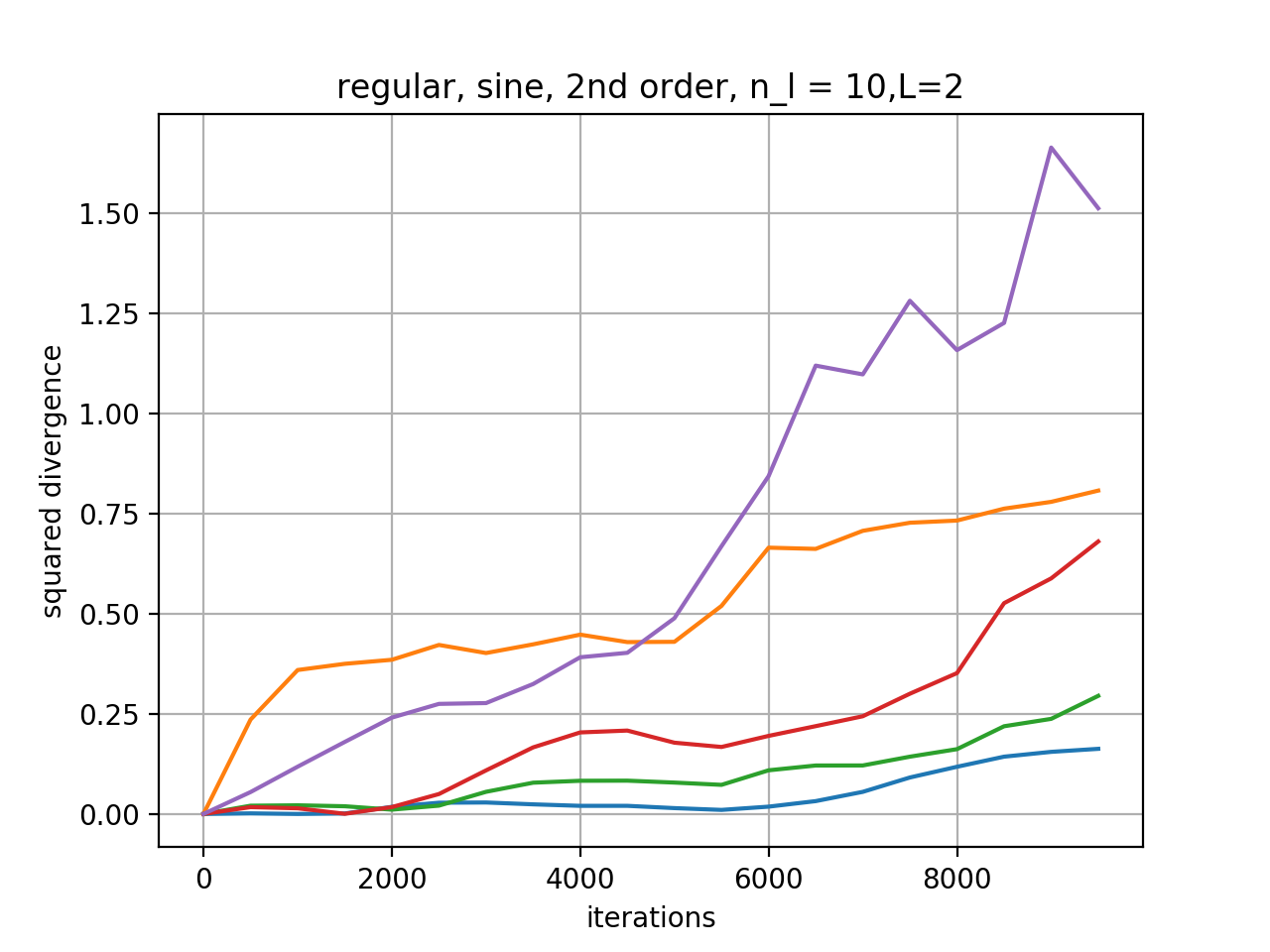} \\
      \includegraphics[width=0.35\textwidth]{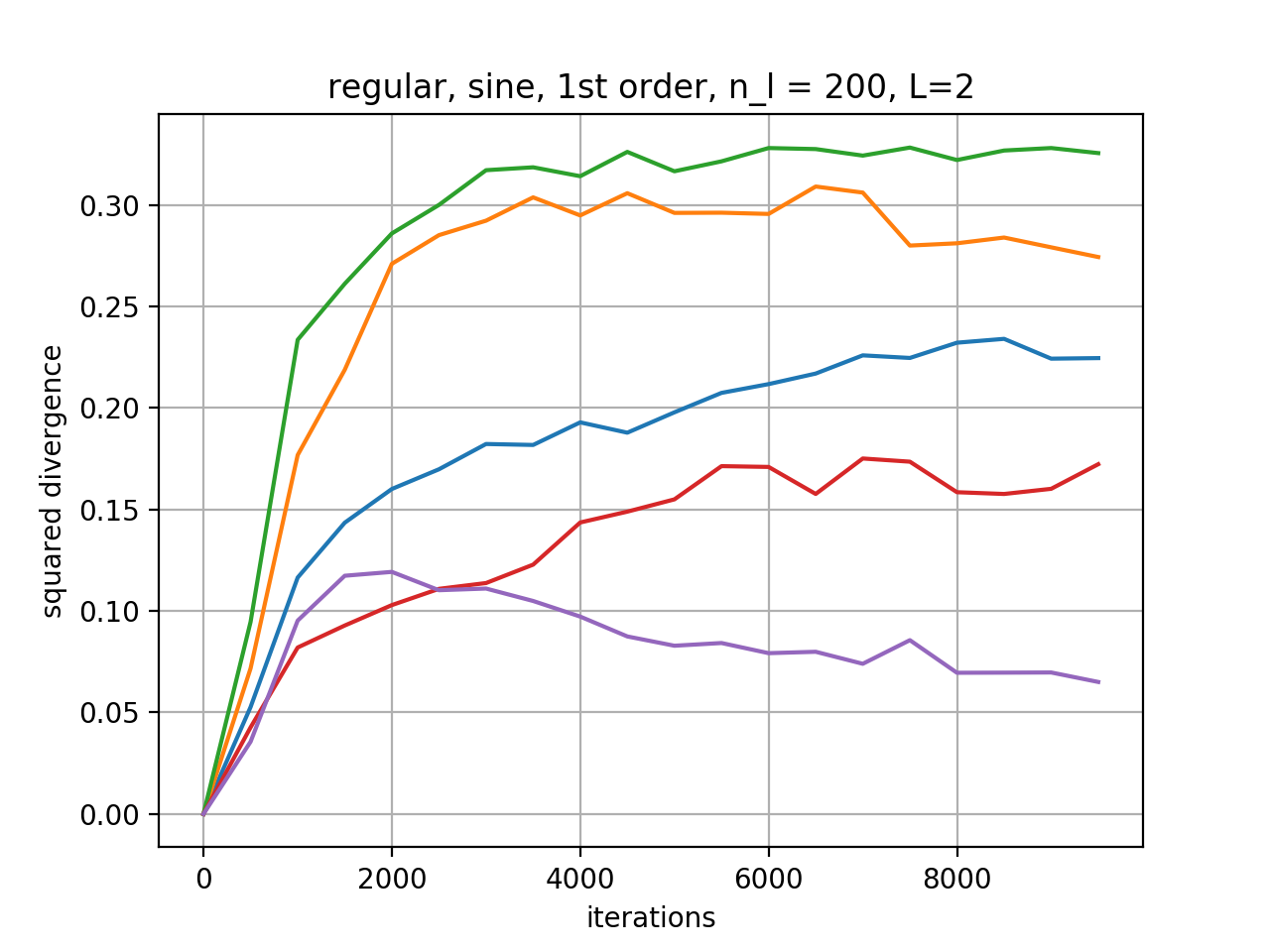}
  \includegraphics[width=0.35\textwidth]{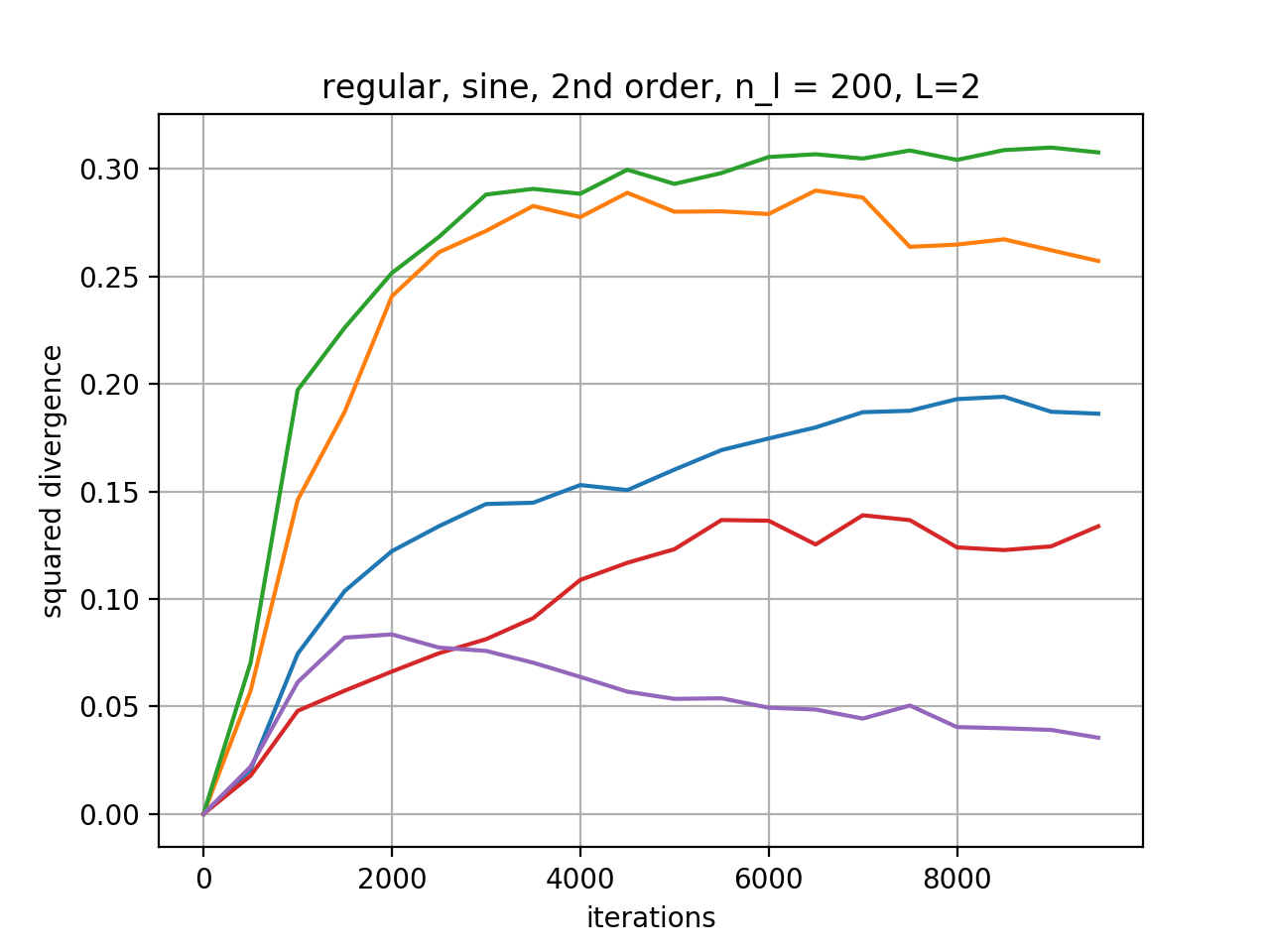}
        \caption{Five simulations of the divergence between the neural network output and the Taylor approximation of the model for regular scaling. The linear model is not an accurate approximation of the dynamics in this regime and adding higher order approximation terms can significantly improve the approximation. }\label{fig02}
\end{figure}

\subsection{Gradient descent optimization}
Here we study the effects of the gradient descent hyperparameters in the lazy training regime. From expression \eqref{eq:newpointpred} and the input Jacobian in \eqref{eq:newpointjac}, besides the structure of the kernel and the data itself, the training hyperparameters influencing the smoothness of the solution are the learning rate $\eta$ and the number of iterations $t$. In Figures \ref{fig1}-\ref{fig2} we show the effects of these hyperparameters. As expected, from Figure \ref{fig3} it can be seen that the function is smoother if $t$ is small. As observed in Figure \ref{fig1}, the convergence is governed by the amount of noise present in the data (i.e. the `roughness' of the function); a slower convergence is obtained for higher noise levels in the noisy sine function. A larger learning rate also results in faster convergence.

\begin{figure}[H]
  \centering
  \includegraphics[width=0.35\textwidth]{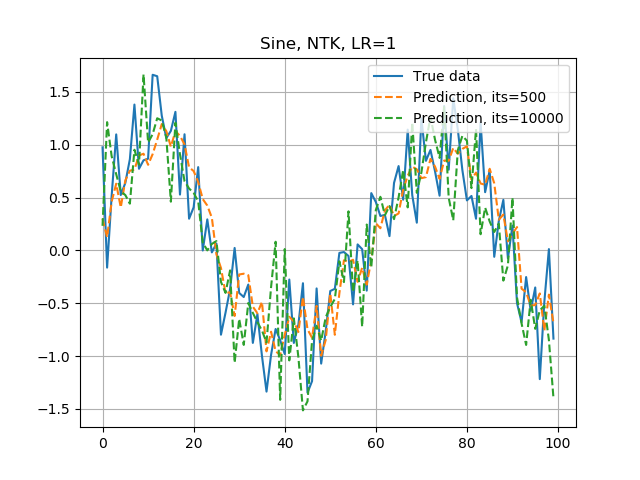}
  \includegraphics[width=0.35\textwidth]{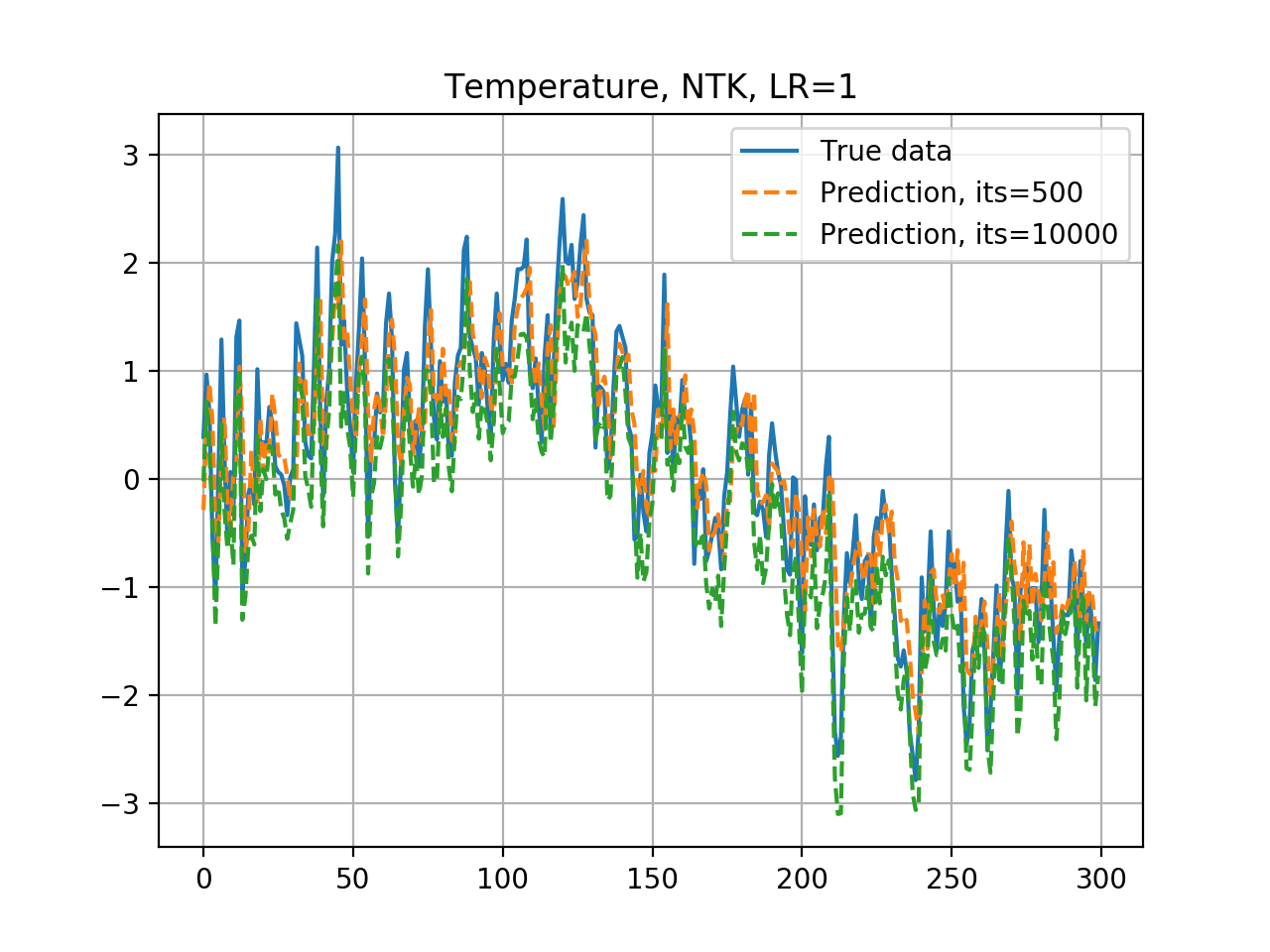}
        \caption{Learned function for different number of training iterations for the noisy sine function (L) and the weather dataset (R). A smoother function is obtained with fewer training iterations.}\label{fig2}
\end{figure}

\begin{figure}[H]
  \centering
  \includegraphics[width=0.35\textwidth]{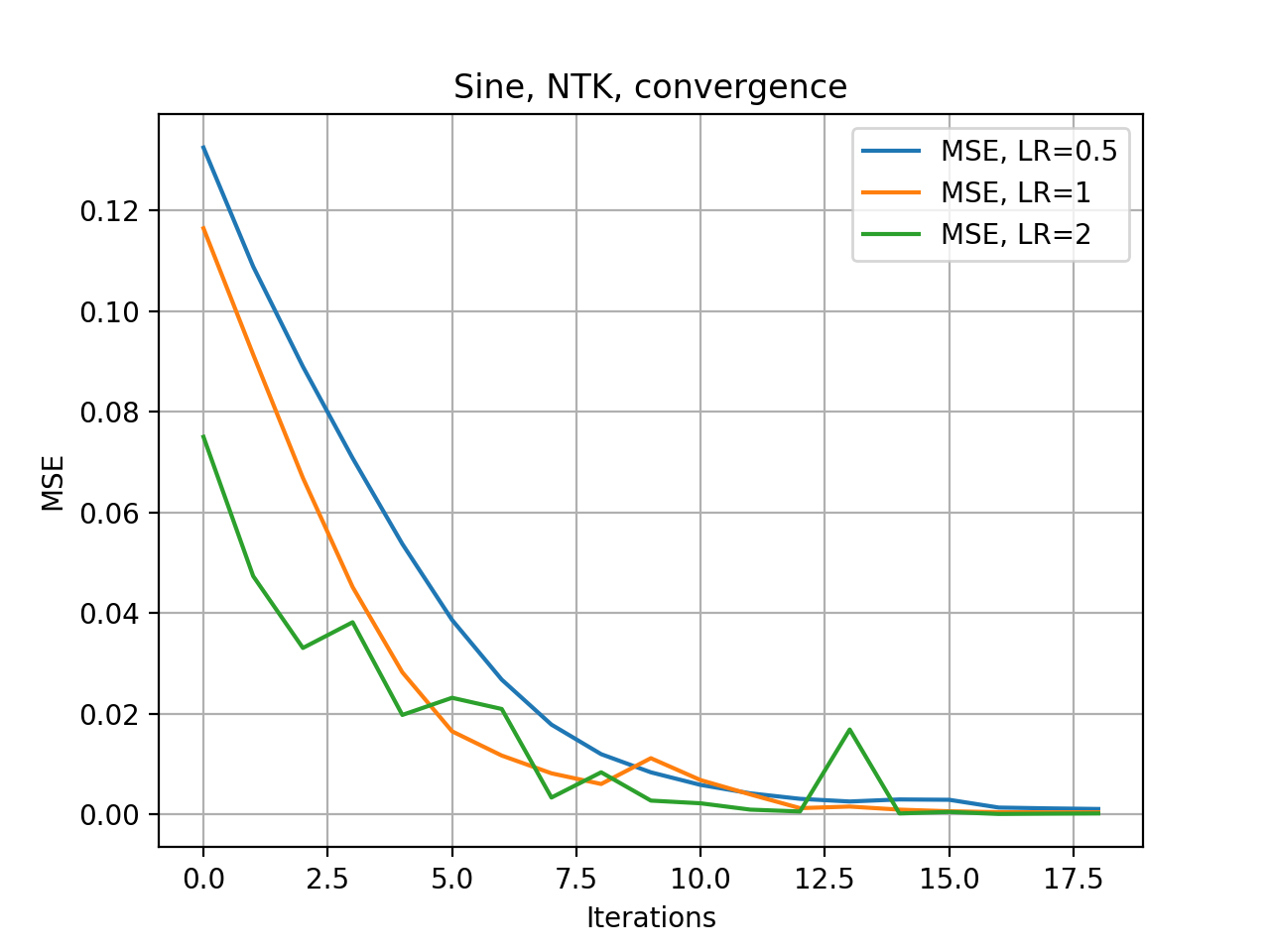} 
  \includegraphics[width=0.35\textwidth]{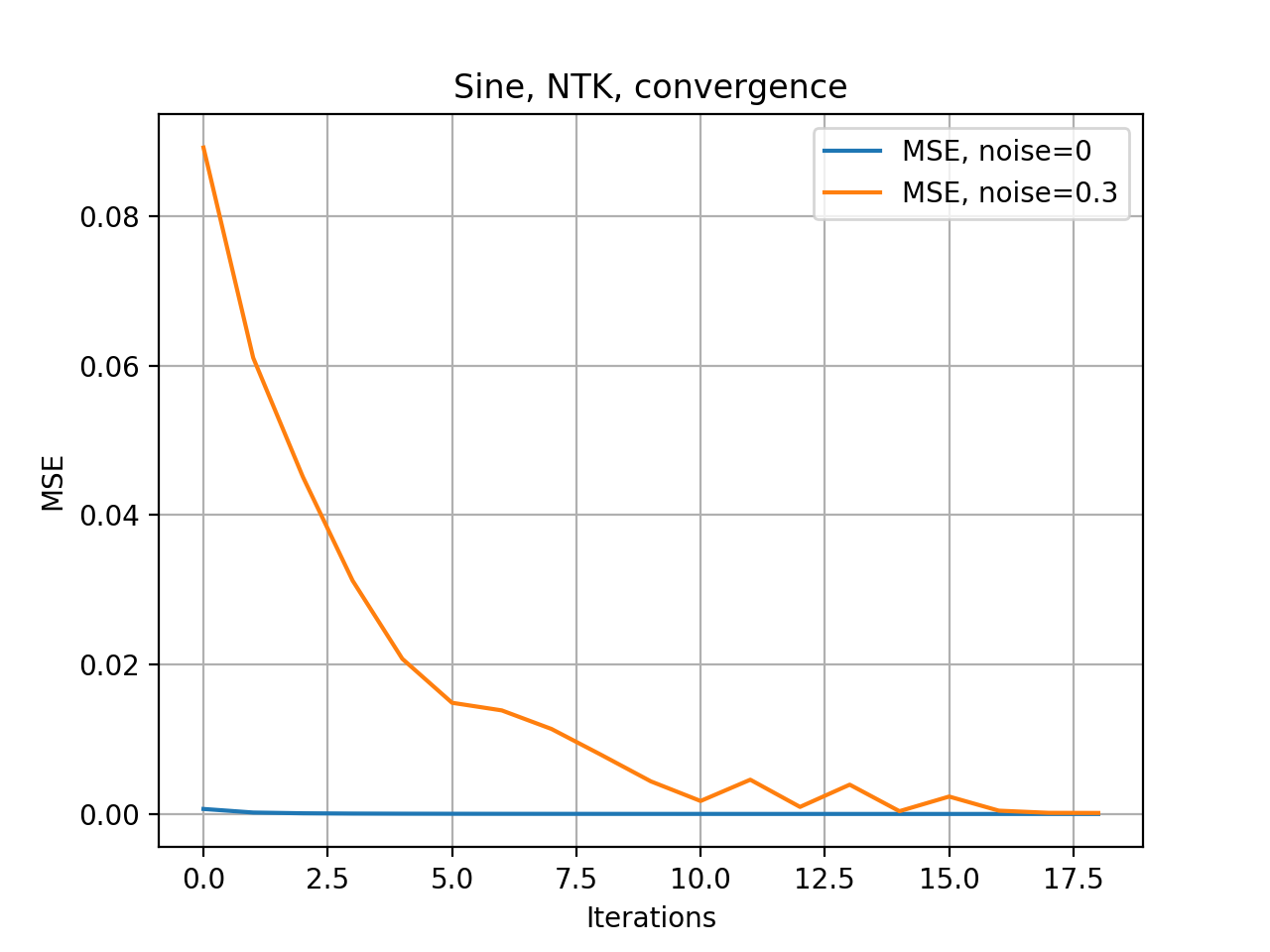}
        \caption{Convergence of the neural network on the noisy sine function for varying learning rates (L) and different noise variance in the data $c$ (R). We observe a faster convergence for larger learning rates and slower convergence for higher noise levels.}\label{fig1}
\end{figure}

\subsection{Noisy training}
In this section we consider noisy optimization algorithms, either by adding explicit noise to the weight updates or through the noise in SGD. We compare the regularization effects of noise in the lazy and non-lazy settings. 
\subsubsection{Lazy regime}
In this section we consider training a neural network under the NTK scaling \eqref{eq:nnntk}. Noise is added directly in function space, so that the following update rule is used: 
\begin{align}\label{eq:fnnoise}
w_{t+1} = w_t -\frac{\eta}{N}(\hat y_t-Y)\nabla_\theta\hat y_t(X) -\frac{\eta}{N}\nabla_\theta\hat y_t(X)\epsilon_t, \;\; \textnormal{with }\epsilon_t\sim \mathcal{N}(0,\sigma^2).
\end{align}
For a small number of iterations with a low noise coefficient under the NTK scaling the linear model is a good approximation of the neural network output (see also the bottom left plot in Figure \ref{fig01}). As observed in Figure \ref{fig3} noise in the case of the linear model affects the train and test error, however does not contribute to a better generalization performance. As seen in Figure \ref{fig31}, and as expected from \eqref{eq:weighthesslazy}, noise has little impact on the trace of the weight Hessian. Since the weight Hessian, which is a metric for generalization performance, is not affected, the out-of-sample performance is not improved by adding more noise, which was observed from Figure \ref{fig3}. This coincides with the results from Section \ref{sec32}. 

\begin{figure}[H]
  \centering
  \includegraphics[width=0.35\textwidth]{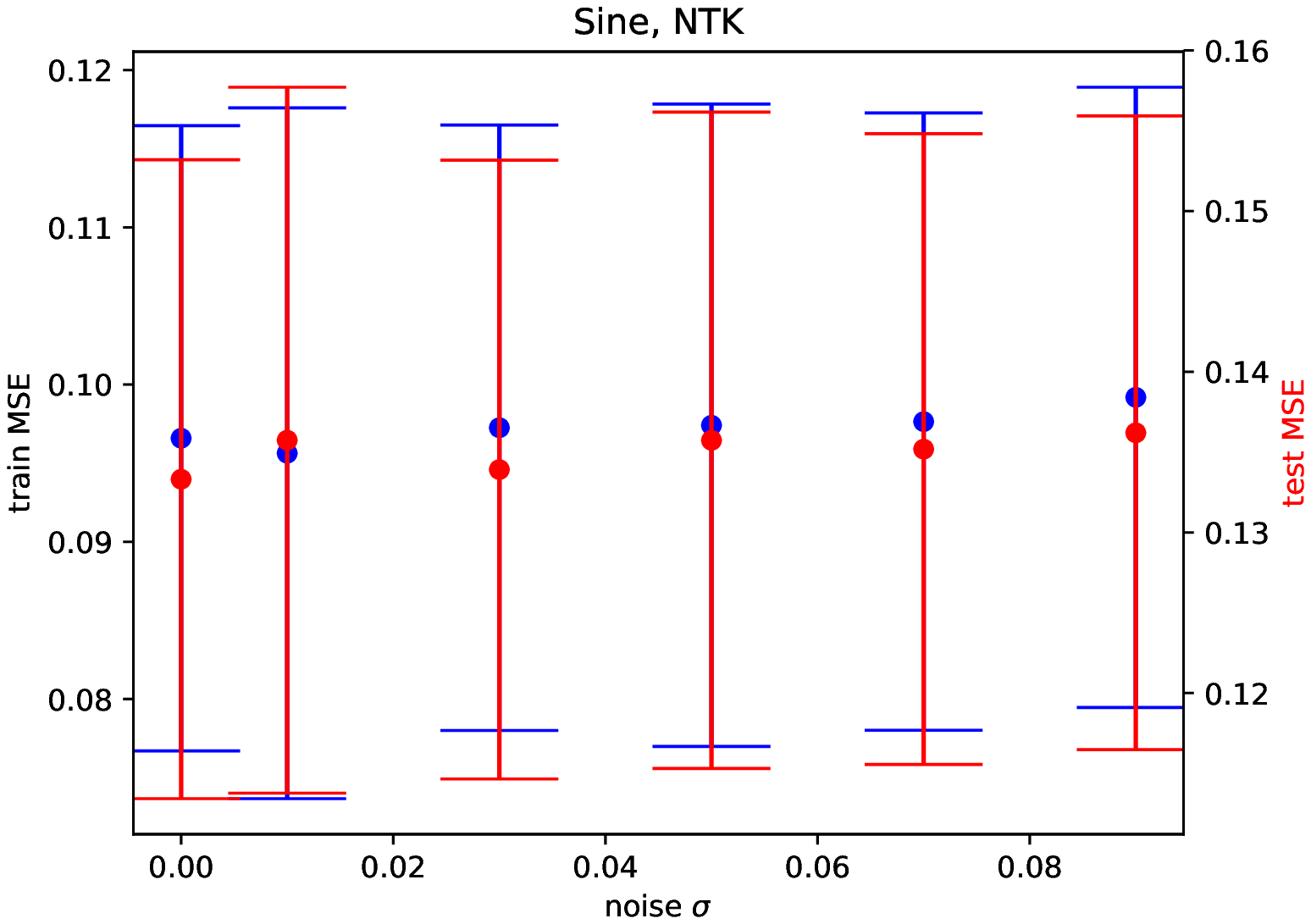}
 \includegraphics[width=0.35\textwidth]{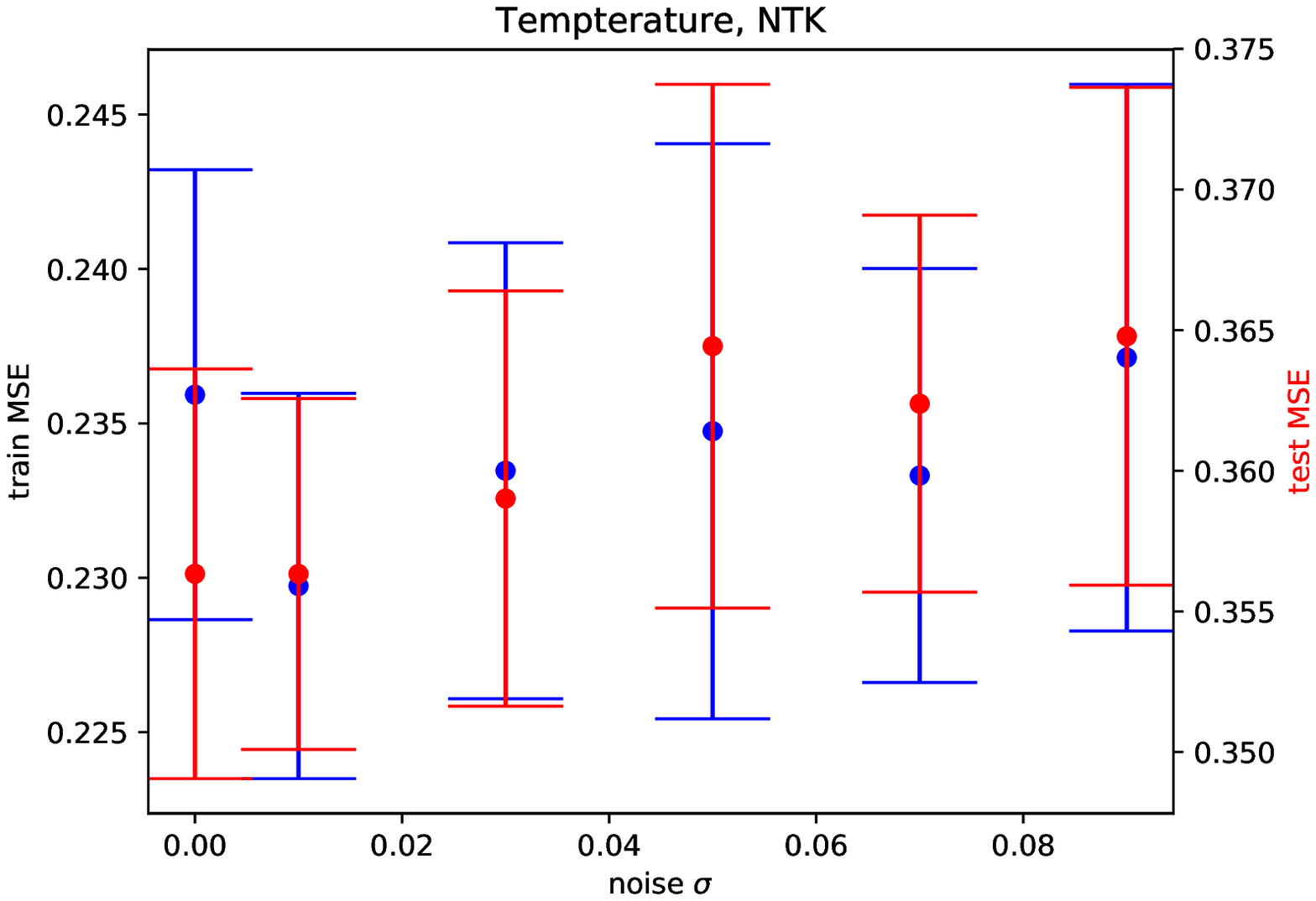}
        \caption{ The effects of noisy training with 500 iterations, with Gaussian noise with standard deviation $\sigma$ added in function space, in the lazy regime on the train and test error on the sine (L) and temperature (R) data. A larger noise variance increase the train error, but, as expected, does not improve the out-of-sample performance. }\label{fig3}
\end{figure}

\begin{figure}[H]
  \centering
   \includegraphics[width=0.35\textwidth]{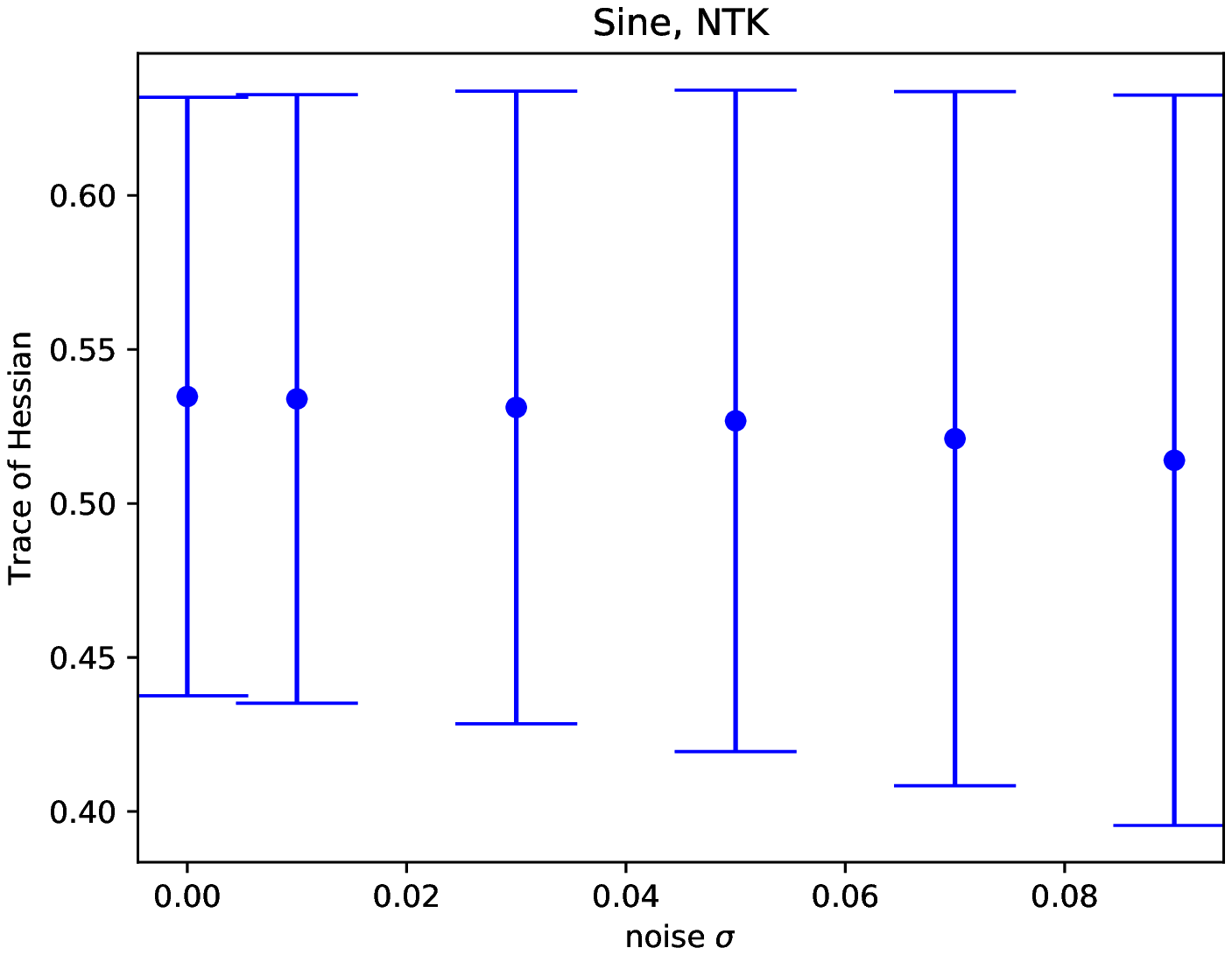}
\includegraphics[width=0.35\textwidth]{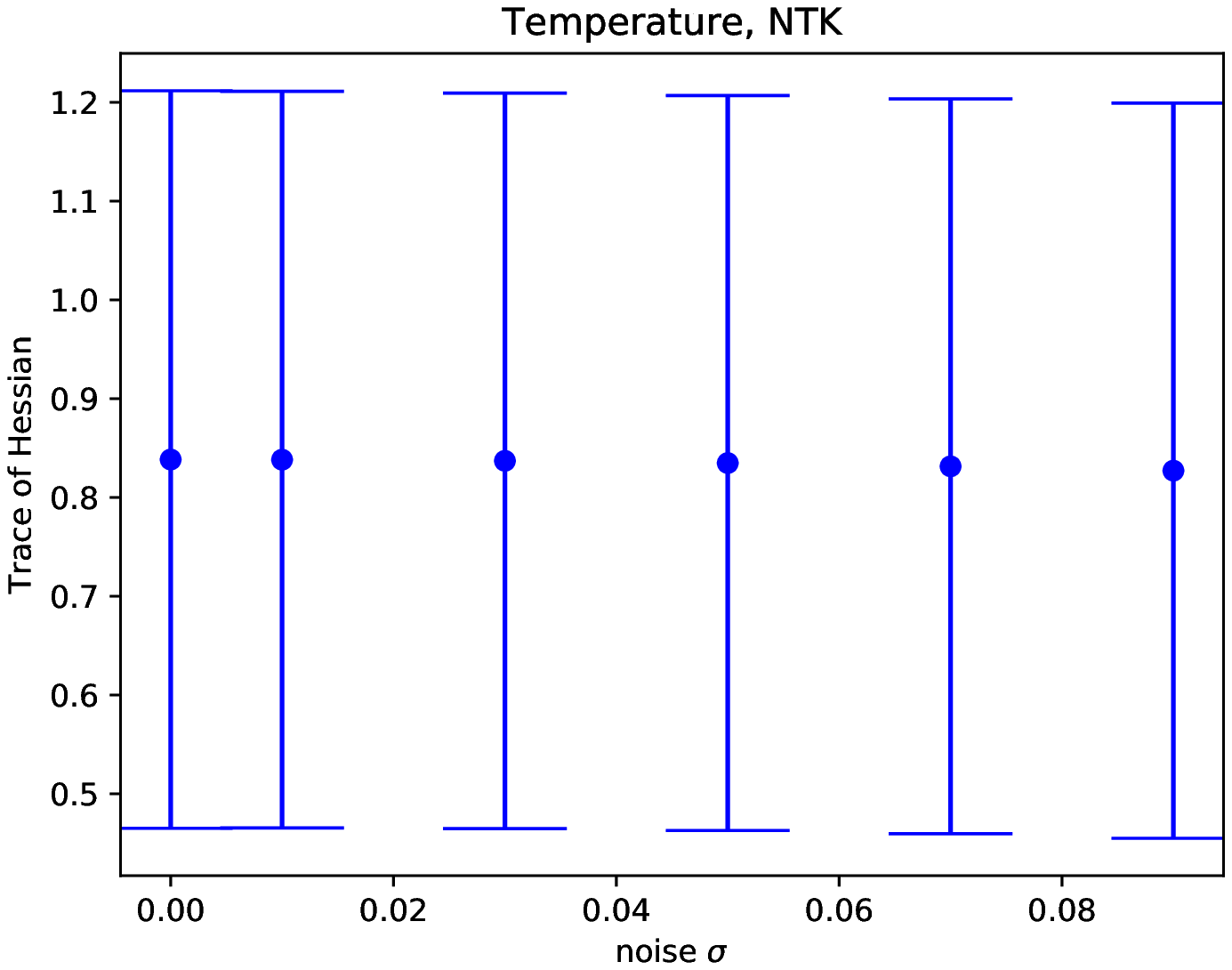}
        \caption{ The effects of noisy training with 500 iterations, with Gaussian noise with standard deviation $\sigma$ added in function space, in the lazy regime on the trace of the first- and last-layer weight Hessian on the sine (L) and temperature (R) data. The trace of the Hessian is not affected.}\label{fig31}
\end{figure}

\subsubsection{Non-lazy regime}
In Figure \ref{fig3a} we show the results of noisy training averaged over 20 runs with the optimization from \eqref{eq:fnnoise} for 10000 iterations while still using the NTK weight scaling. As seen from Figures \ref{fig3a}-\ref{fig3a1}, for a higher number of training iterations, the noise influences the trace of the weight Hessian and results in a slightly better test performance. The observed regularization effect of noise can be explained by the fact that in the finite-width model more noise allows to deviate from the linear model, so that unlike in Figure \ref{fig3} the model is no longer in the lazy training regime and a small regularization effect is observed.  

\begin{figure}[h!]
  \centering
  \includegraphics[width=0.35\textwidth]{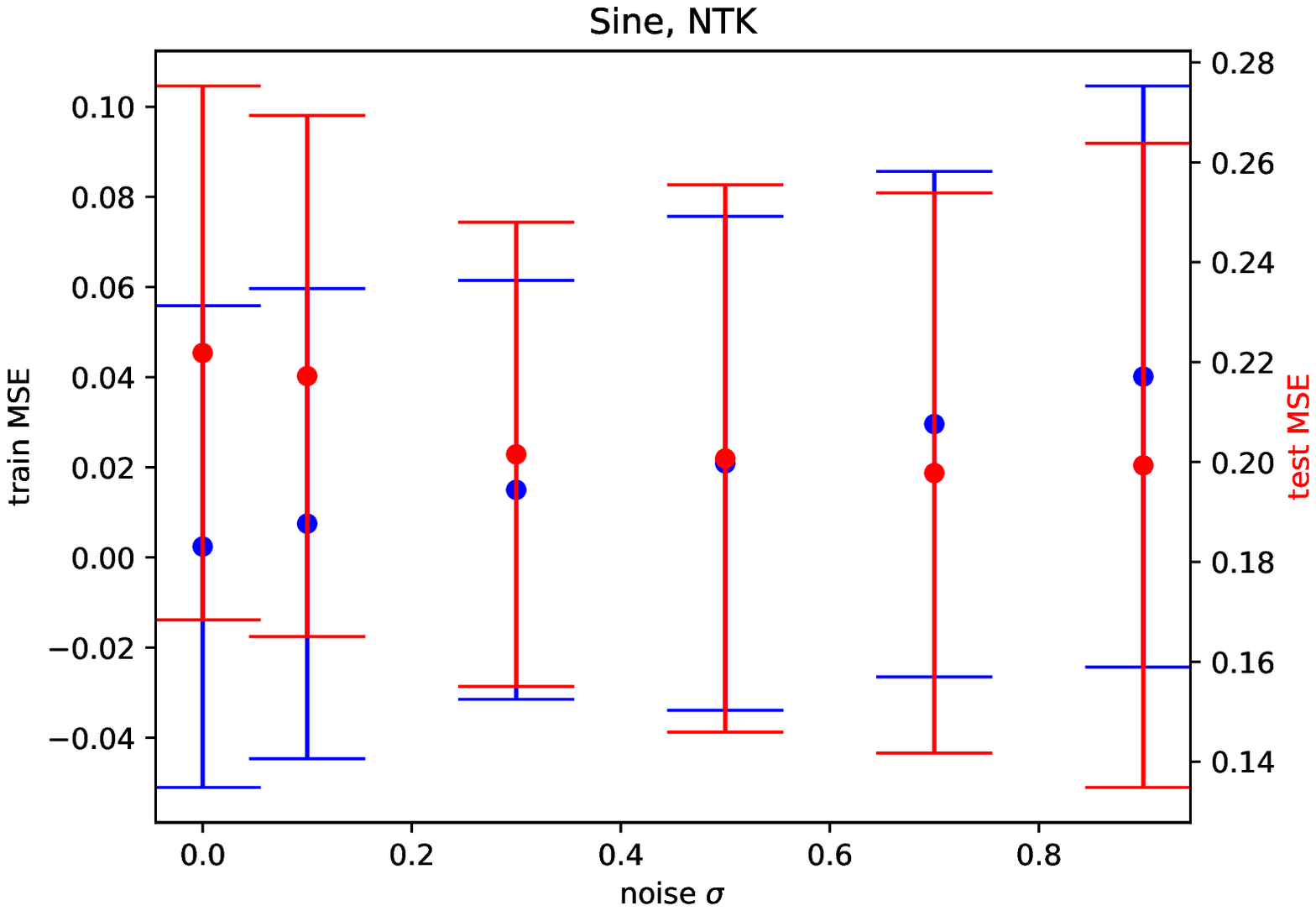}
 \includegraphics[width=0.35\textwidth]{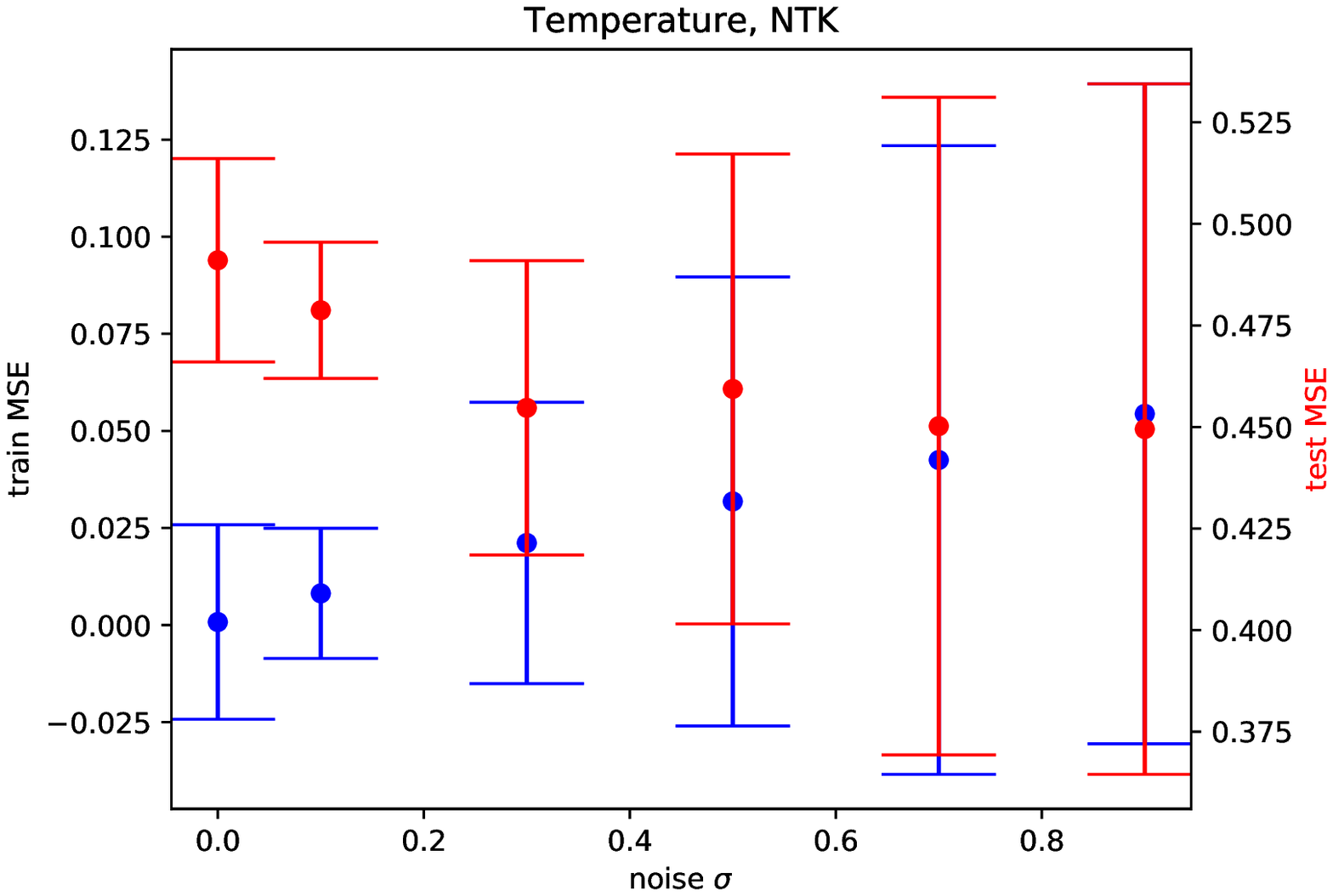}
        \caption{The effects of noisy training with 10000 iterations, with Gaussian noise with standard deviation $\sigma$ added in function space, with NTK scaling on the train and test error on the sine (L) and temperature (R) data. A larger noise variance increases the train error and the generalization error does decrease due to the larger train error. The variance for both the train and test errors increases as more noise is introduced.}\label{fig3a}
\end{figure}

\begin{figure}[h!]
  \centering
   \includegraphics[width=0.35\textwidth]{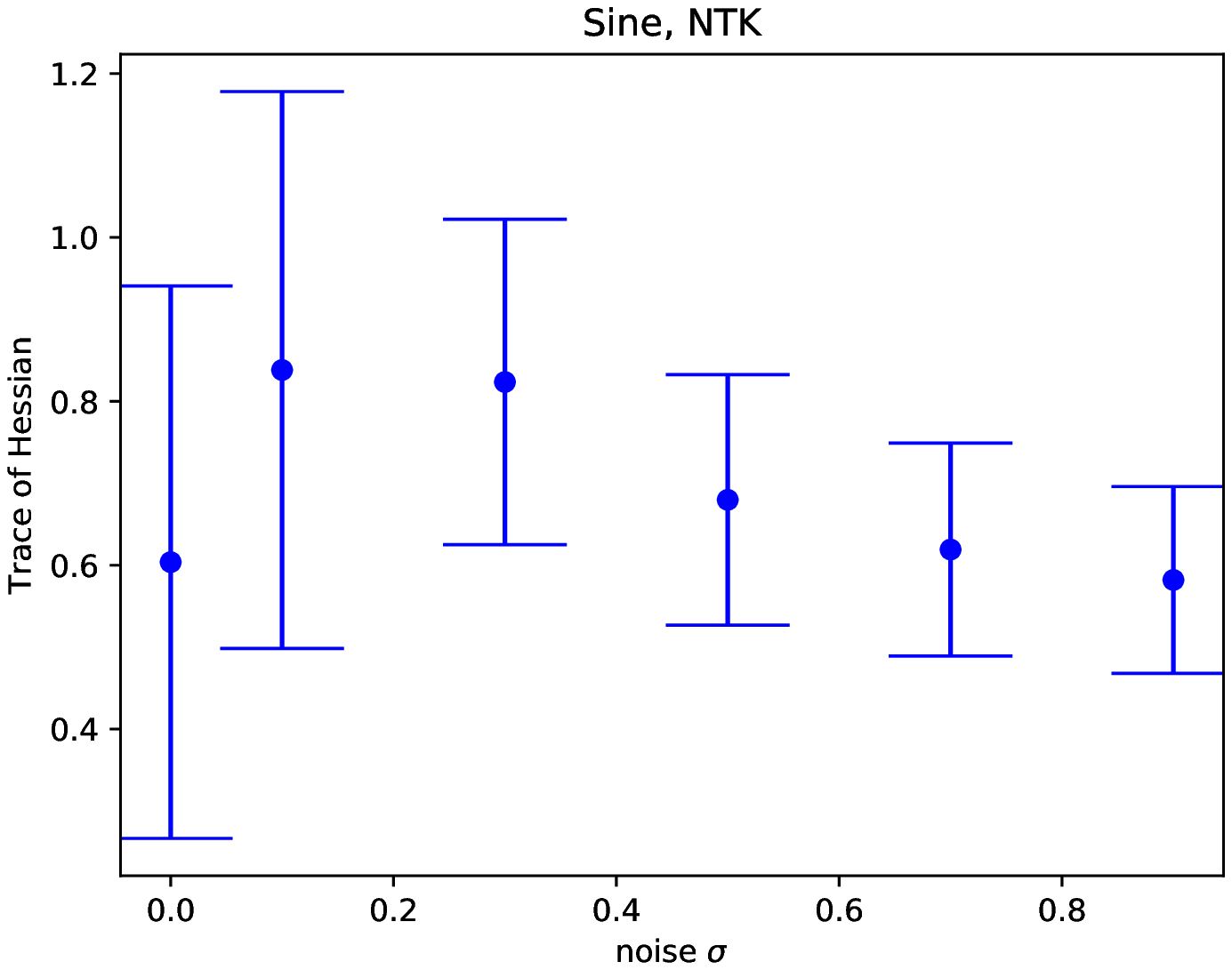}
\includegraphics[width=0.35\textwidth]{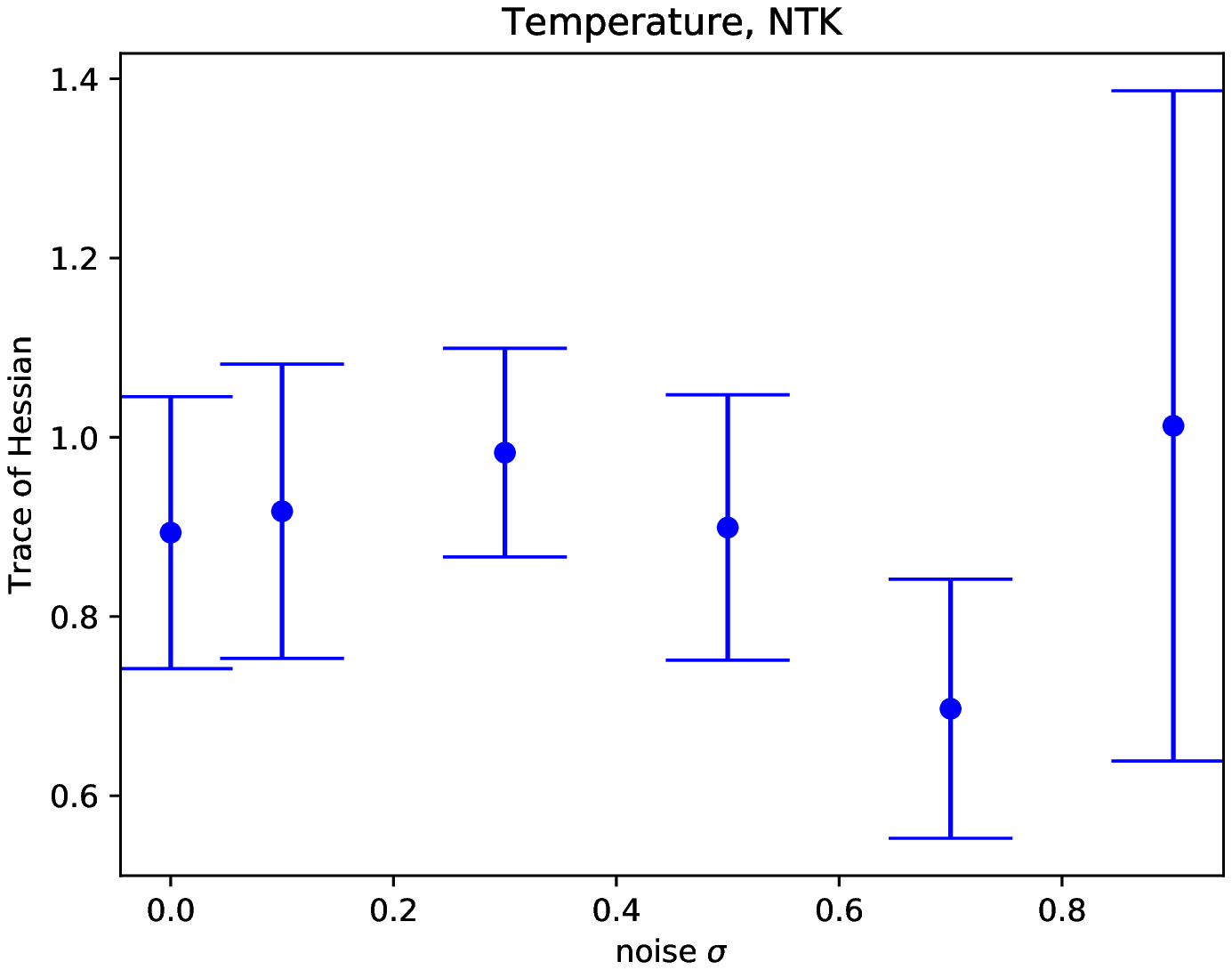}
        \caption{The effects of noisy training with 10000 iterations, with Gaussian noise with standard deviation $\sigma$ added in function space, with NTK scaling on the trace of the first- and last-layer weight Hessians on the sine (L) and temperature (R) data. The trace of the Hessian decreases for more noise, meaning that in this non-lazy regime noise regularizes the output function.}\label{fig3a1}
\end{figure}

In Figures \ref{fig5}-\ref{fig51} we show the results of SGD averaged over 20 runs for the regular scaling (here, He initialization). We modify the variance of the noise using the batch size and again consider 10000 training iterations. We observe that for neural networks in this non-lazy training regime, i.e. ones in which we do not scale the weights with the NTK scaling and train for a larger number of iterations so that the weights deviate significantly from the , the noise in the SGD has a significant regularization effect; in particular the noise from using a batch size of one results in a much lower test error. This is similar to the theoretical result obtained in Sections \ref{sec3}, in which it was shown that regularization effects due to noise seem to arise mostly in the regular training regime when the network function is \emph{non-linear} in the weights as well. The variance of the train error is higher when more noise is injected.  

\begin{figure}[h!]
  \centering
  \includegraphics[width=0.35\textwidth]{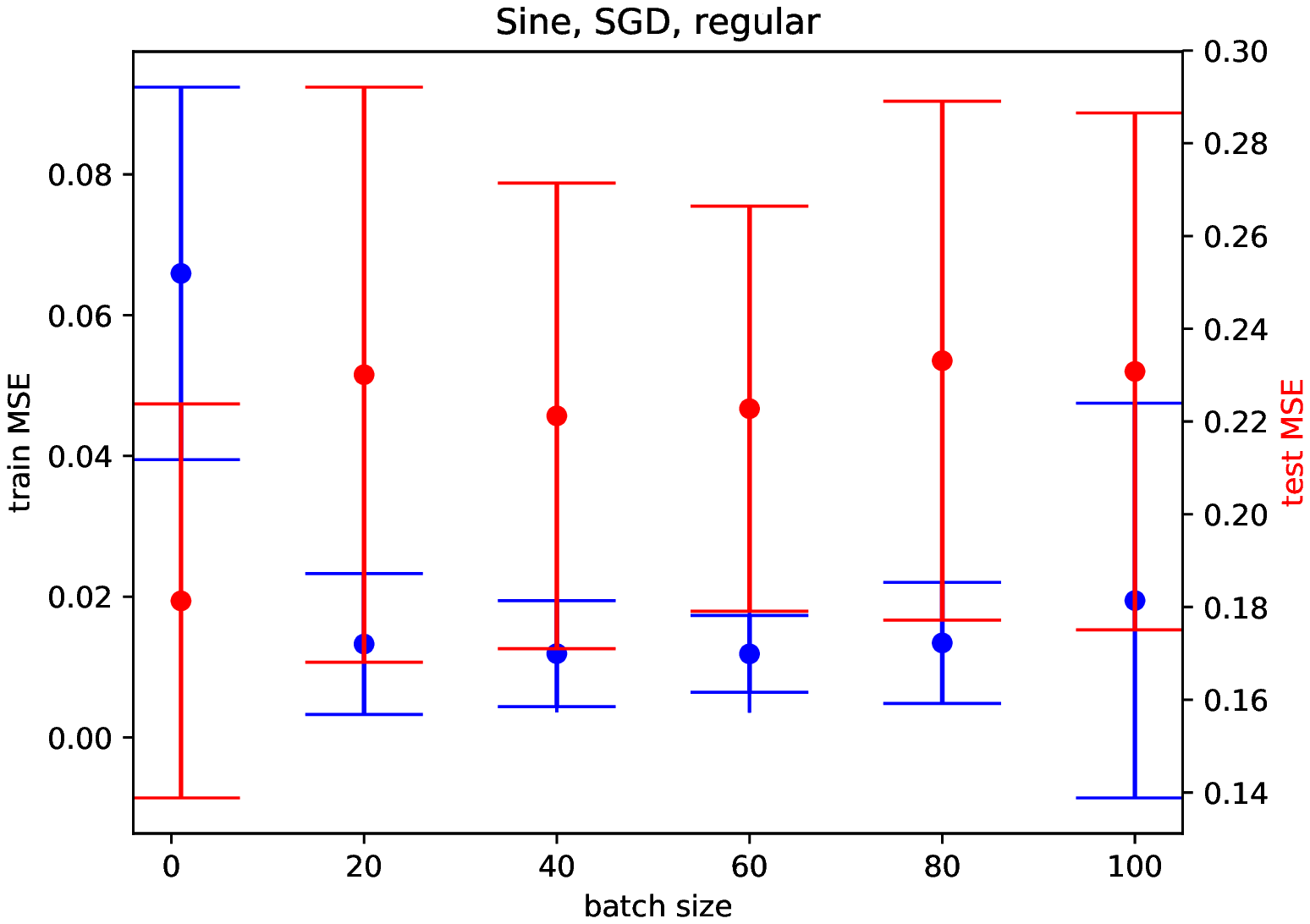}
 \includegraphics[width=0.35\textwidth]{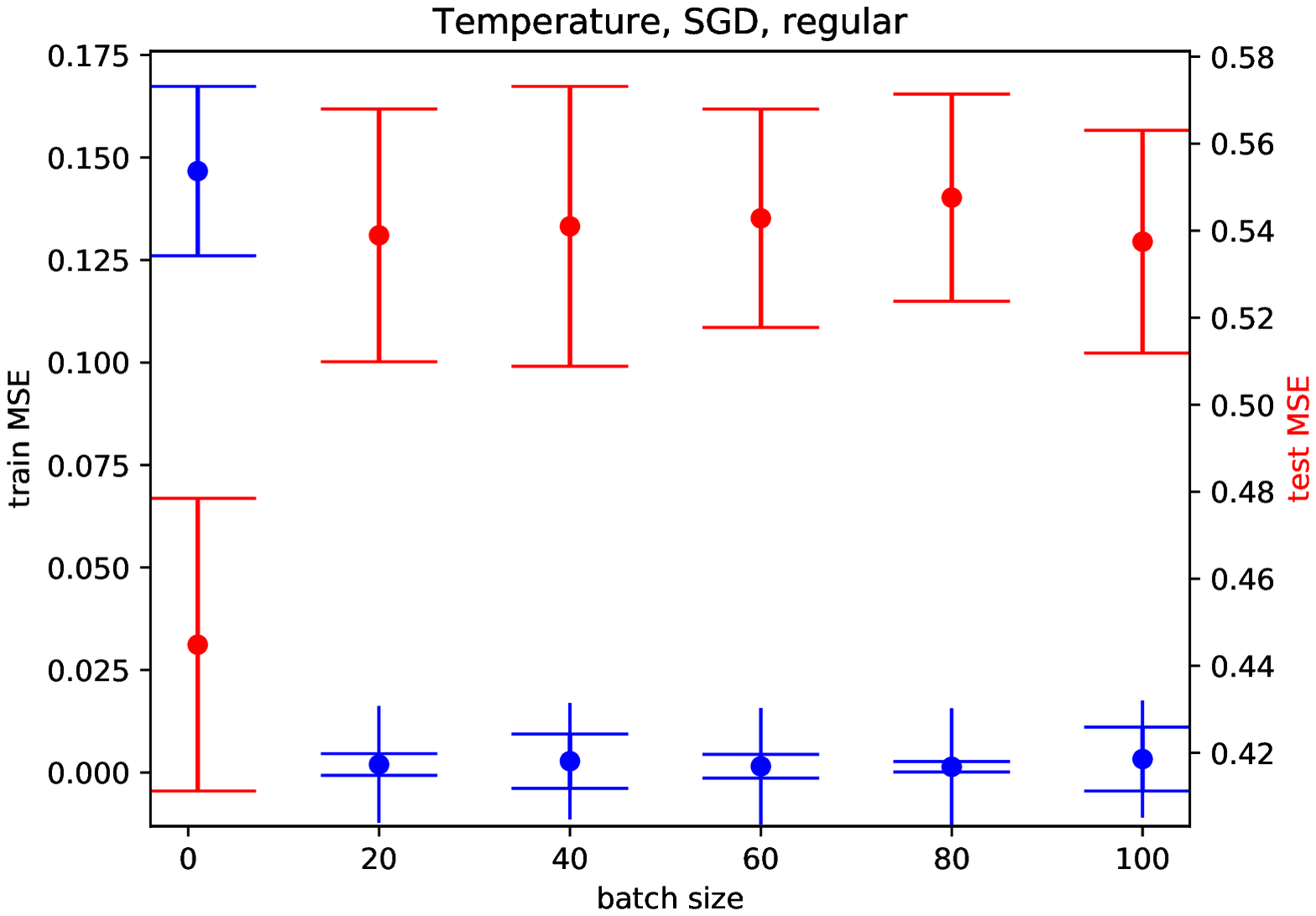}
        \caption{The effects of stochastic gradient descent with 10000 iterations in the non-lazy regime for the sine (L) and temperature (R) dataset on the train and test performance. In the non-lazy training regime the noise improves the test performance. In particular for a batch size of one the generalization error is significantly smaller.}\label{fig5}
\end{figure}

\begin{figure}[h!]
  \centering
   \includegraphics[width=0.35\textwidth]{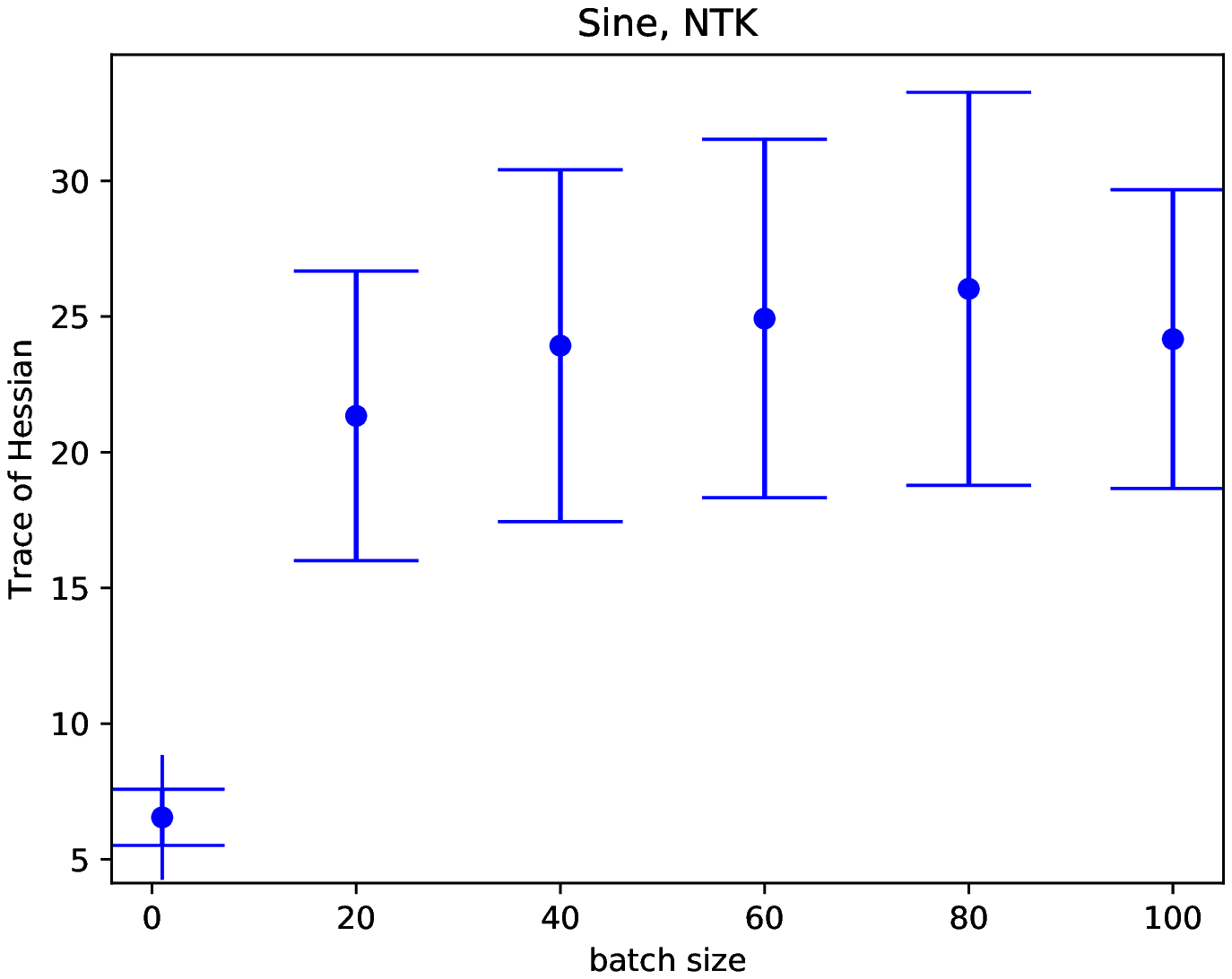}
\includegraphics[width=0.35\textwidth]{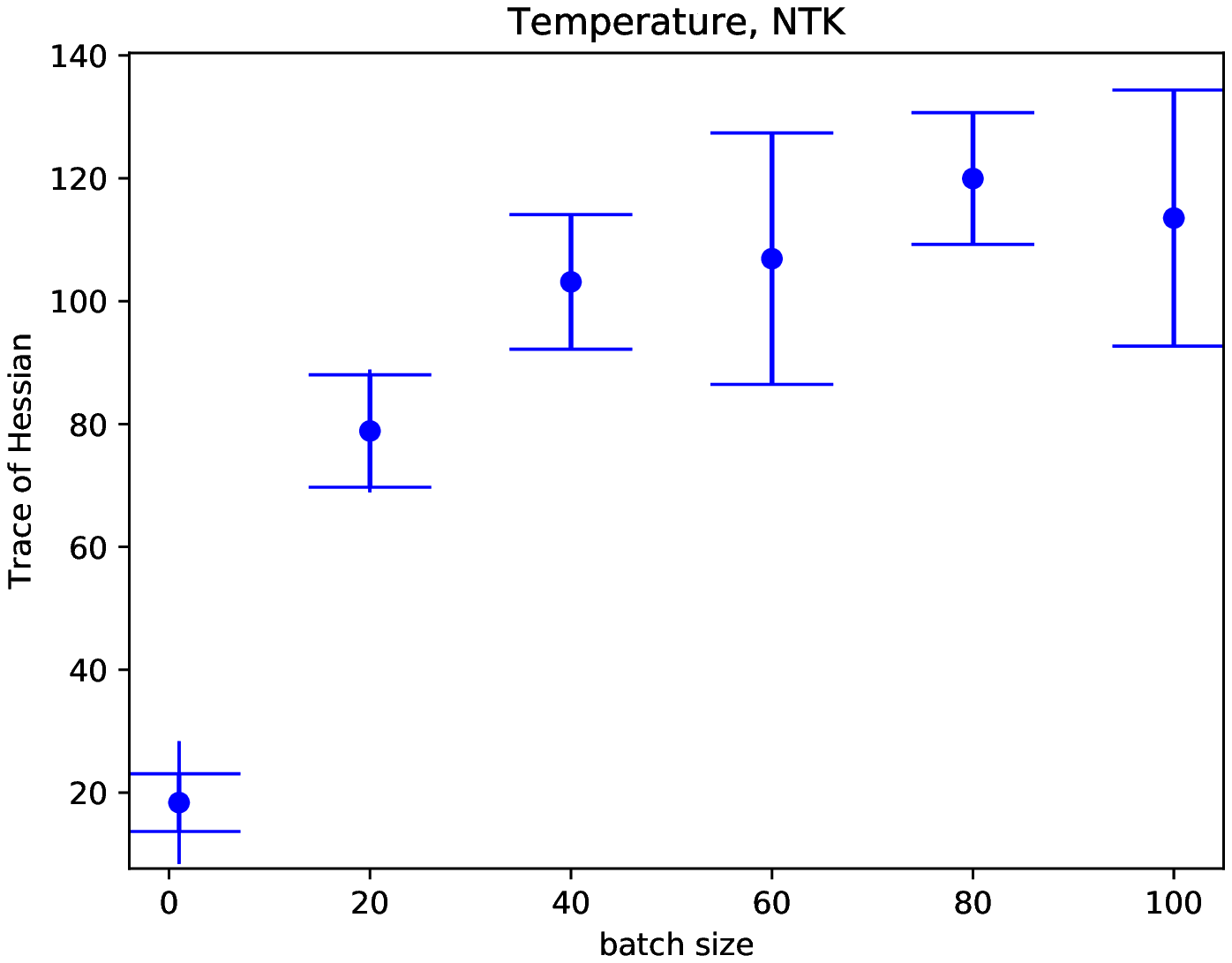}
        \caption{The effects of stochastic gradient descent in the non-lazy regime for the sine (L) and temperature (R) dataset on the trace of the first- and last-layer weight Hessians. As expected, more noise, and in particular the noise when using a batch size of one, results in a smaller weight Hessian.}\label{fig51}
\end{figure}

\section{Conclusion and discussion}
In this work we studied the effects of the training method on generalization capabilities of a neural network through analytic expressions of the network output during training. We studied the effect of the optimization hyperparameters in the lazy regime, i.e. when the network output can be approximated by a linear model. We observed that the addition of noise in the gradient descent updates in the lazy regime can keep the network from fully converging on the train data, however does not directly result in a regularizing or smoothing effect. 
In general, the first-order approximation of the neural network output might not be sufficient and one needs to take into account higher order terms. In particular this happens if the weights during training deviate significantly from their value at initialization which occurs for relatively narrow and/or deep networks and when the number of training iterations or the noise variance increases. Under stochastic training the expected value of the weights satisfies a Cauchy problem, which in general is unsolvable due to the state-dependency in the operator. We presented a novel methodology where a Taylor expansion of the network output was used to solve this Cauchy problem. This allowed to obtain analytic expressions for the weight and output evolution under stochastic training in a setting in which the model is no longer equivalent to its first order approximation. We showed that the higher order terms are affected by noise in the weight updates, so that unlike in the linear model noise has an explicit effect on the smoothness of the network output. 

The Taylor expansion method used in order to obtain analytic expressions for the network output can be seen as an extension of the linear model approximation and it allows to gain insight into the effects of higher-order terms. The method can be further extended to take into account high-dimensional weight vectors. These expressions can provide insight into the effects of network architecture, such as using a convolutional neural network or particular choices of activation function, on the output function evolution. Furthermore, the high-dimensional approximation can be used to study the effects of non-isotropic noise in the high-dimensional loss surface. Based on the observations in \cite{simsekli19}, noisy optimization converges to wider minima is obtained when using L\'evy-driven noise, while the Gaussian noise can actually stimulate the convergence to sharp and deeper minima. It is therefore of significant interest to obtain insight into the function evolution when optimization is driven by a L\'evy process. The methodology presented in this work can be adapted to include a L\'evy-driven jump process (see e.g. \cite{lorig15cf} where the evolution of the L\'evy-SDE is studied in Fourier space).

While understanding the effects of network hyperparameters on the output evolution and generalization capabilities remains a challenging task due to the interplay of so many aspects of the network (e.g. architecture, optimization algorithm and data), we hope that the Taylor expansion method and resulting insights presented in this work will contribute to novel insights and the development of robust and stable neural network models. 

\bibliographystyle{siam}
\bibliography{biblio}

\end{document}